\documentclass{article}
\usepackage{PRIMEarxiv}

\usepackage[utf8]{inputenc} 
\usepackage[T1]{fontenc}    
\usepackage{hyperref}       
\usepackage{url}            
\usepackage{booktabs}       
\usepackage{amsfonts}       
\usepackage{nicefrac}       
\usepackage{microtype}      
\usepackage{xcolor}         

\usepackage{amsmath}
\usepackage{float}
\usepackage{wrapfig}
\usepackage{graphicx}
\usepackage{subfigure}
\usepackage{subcaption}
\usepackage{multirow}

\usepackage{listings}
\usepackage{amsfonts}
\usepackage{gensymb}
\usepackage{blindtext}
\usepackage{mathtools}
\usepackage{amsthm}
\usepackage{bm}

\newtheorem{theorem}{Theorem}[section]

\newtheorem{corollary}{Corollary}[theorem]
\newtheorem{defn}[theorem]{Definition}

\usepackage[toc,page]{appendix}

\pagestyle{fancy}
\thispagestyle{empty}
\rhead{ \textit{ }} 

\fancyhead[LO]{Full-range Head Pose Geometric Data Augmentations}

\title{Full-range Head Pose Geometric Data Augmentations}

\author{
Hu, Huei-Chung\\
  Docomo Innovations \\
  \texttt{heidi.hu@docomoinnovations.com} \\
    \and
Wu, Xuyang  \\
    Santa Clara University \\
    \texttt{xwu5@scu.edu}
    \and
Liu, Haowei \\
    Santa Clara University \\
    \texttt{hliu6@scu.edu}
    \and
Wei, Ting-Ruen  \\ 
    Santa Clara University \\
    \texttt{twei2@scu.edu}
    \and
Wu, Hsin-Tai\footnotemark[1]\\ 
    Docomo Innovations \\
    \texttt{hwu@docomoinnovations.com}
}

\begin{document}

\maketitle

\begin{abstract}
Many head pose estimation (HPE) methods promise the ability to create full-range datasets, theoretically allowing the estimation of the rotation and positioning of the head from various angles. However, these methods are only accurate within a range of head angles; exceeding this specific range led to significant inaccuracies. This is dominantly explained by unclear specificity of the coordinate systems and Euler Angles used in the foundational rotation matrix calculations. Here, we addressed these limitations by presenting (1) methods that accurately infer the correct coordinate system and Euler angles in the correct axis-sequence, (2) novel formulae for 2D geometric augmentations of the rotation matrices under the (SPECIFIC) coordinate system, (3) derivations for the correct drawing routines for rotation matrices and poses, and (4) mathematical experimentation and verification that allow proper pitch-yaw coverage for full-range head pose dataset generation. Performing our augmentation techniques to existing head pose estimation methods demonstrated a significant improvement to the model performance. Code will be released upon paper acceptance.
\end{abstract}

\begin{figure}[H]
\centering
\includegraphics[width=0.64 \textwidth]{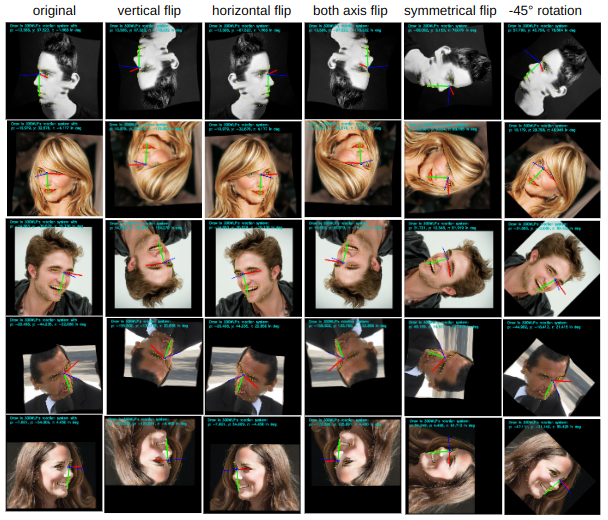}
\caption{Illustration of our 3D flipping and rotation augmentation formula (Corollary \ref{cor:300wlp_flipcor}) for the 300W-LP dataset.}
\label{fig:pose_augmentation}
\end{figure} 

\section{Introduction}
Head Pose Estimation (HPE) has evolved significantly, driven by the increasing integration of computer vision and vision-based deep learning \cite{DBLP:journals/sncs/AspertiF23}. Early approaches often relied on facial key-point detection and images with fixed rotation angle labeling. However, this field witnessed a paradigm shift with the advent of deep learning, especially Convolutional Neural Networks (CNNs). These models could automatically learn hierarchical features from data, thus notably improving the accuracy and robustness of HPE algorithms.

However, it is noteworthy that many studies on HPE overlook fundamental definitions of the coordinate systems employed. We observed that orientation axis-drawing routines are often borrowed from code intended for unknown or unspecific coordinate systems. We address these fundamental problems and derive the math for geometric rotation augmentations. In addition, we showcase how these changes improve the existing model performance and streamline the process of dataset creation.

\begin{figure}[H]
\centering
\includegraphics[width=1.0 \textwidth]{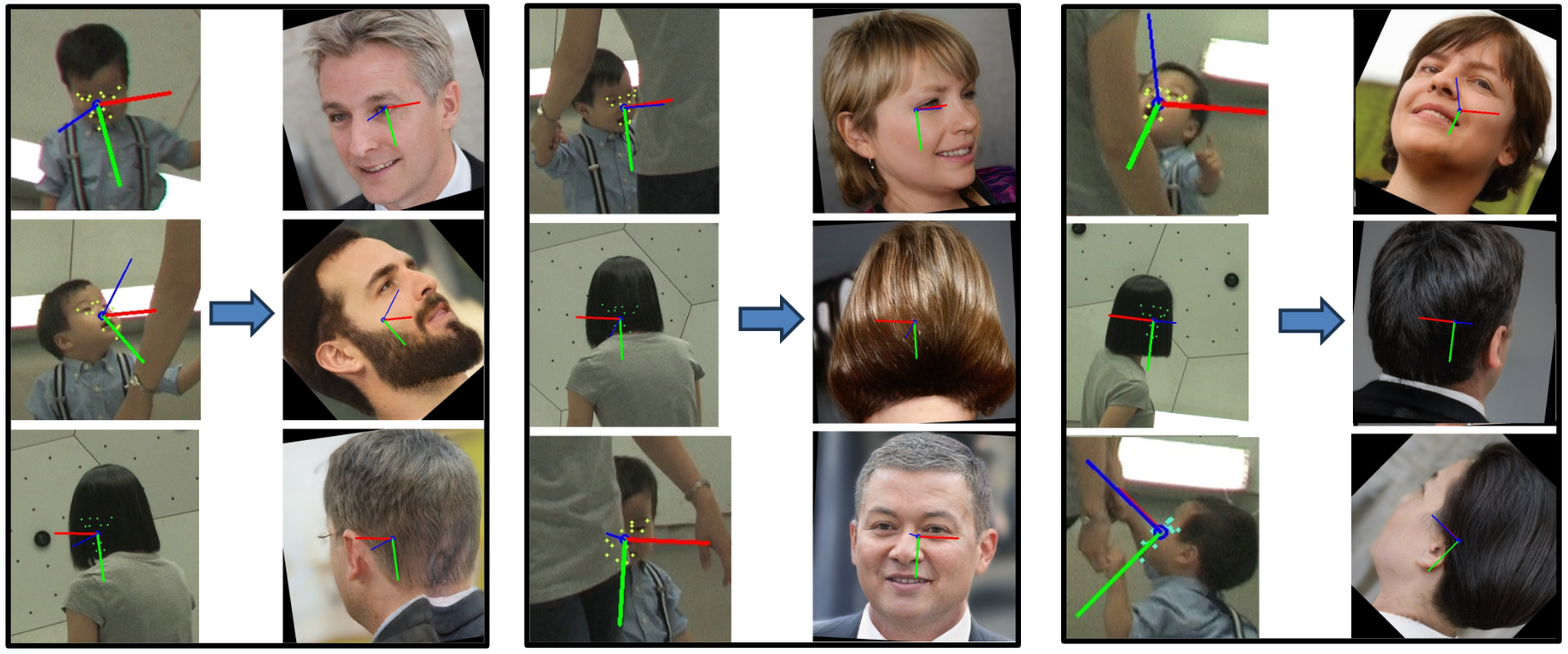}
\caption{Applying Fig. \ref{listing:1} with Panohead: we extract head rotations from CMU\_HPE\_10K, transform them into $(p', y', r')$ triples with Fig. \ref{listing:1}, use Panohead to create synthetic human head images of the given $p', y'$, and finally apply the roll rotations of angles $r'$.}
\label{fig:pano_cmu1}
\end{figure} 

\section{Related Works}
This paper will focus on monocular RGB images. Classical HPE approaches preceding 2008 can be found in the survey paper \cite{hpe_article}.  Deformable models are thoroughly discussed in \cite{1227983, DBLP:conf/cvpr/ZhuLLSL16, guo2020towards, zhu2017face}, and were used to create the commonly used synthetic pose datasets such as 300W-LP \cite{DBLP:conf/cvpr/ZhuLLSL16}.

\subsection{Head Pose Dataset Creation}\label{sec:hpe_dataset_creation}
The 300W across Large Poses (300W-LP) dataset \cite{DBLP:conf/cvpr/ZhuLLSL16} was based on 300W \cite{DBLP:conf/iccvw/SagonasTZP13}, which standardized multiple alignment datasets with 68 landmarks, including AFW \cite{DBLP:conf/cvpr/ZhuR12}, LFPW \cite{DBLP:conf/cvpr/BelhumeurJKK11}, HELEN \cite{DBLP:conf/iccvw/ZhouFCJY13}, IBUG \cite{DBLP:conf/iccvw/SagonasTZP13}, and XM2VTS \cite{Messer1999XM2VTSDBTE}. Its Euler-angle labels are estimated from \cite{DBLP:conf/cvpr/ZhuLLSL16}'s 3D Dense Face Alignment (3DDFA), which applies Blanz et al.'s morphable model 3DMM \cite{1227983}, to obtain the standard and rotated 3D faces tailored for an image. It doubled its dataset with the vertical flipping augmentation. AFLW2000-3D (AFLW2000) \cite{zhu2017face}, similar to 300W-LP's construction, contains the ground truth 3D faces and the corresponding 68 landmarks of the first 2,000 AFLW \cite{6130513} samples. BIWI \cite{DBLP:journals/ijcv/FanelliDGFG13} Kinect Head Pose Database is another widely used HPE dataset.

To solve the limited frontal-view challenge from 300W-LP, WHENet \cite{zhou2020whenet} takes advantage of the CMU Panoptic Dataset \cite{Joo_2017_TPAMI}'s extensive 3D facial landmarks captured from 32 cameras spanning an entire hemisphere. It pioneered techniques to transform the Panoptic Dataset into a comprehensive HPE dataset covering full-range of angles. Later HPE papers like 6D-RepNet360 \cite{hempel2023robust} used WHENet's CMU Panoptic pose data generation method/code to improve its previous range limited 6D-RepNet \cite{Hempel_2022}.  

Followed with Hu et al. \cite{hu2024mathematical}'s improvement of WHENet's accuracy in head pose rotation calculation, we enhance WHENet's HPE generation and deliver the highest-quality head pose label generation \emph{CMU\_HPE\_10K} for the Panoptic dataset. It contains 10466 images and their head rotation labels. Although offering full-range (Table \ref{table:euler_range_limits}) and precise head orientations (Section \ref{sec:aug_performance}), \emph{CMU\_HPE\_10K} encounters limitations of consistent background and human head diversity. Replacing backgrounds is easy, but replacing heads is subtle. Head rotations and correspondent images are hard to create. We address this issue using novel formulas derived from rigorous math, effectively generating synthetic head rotations and the corresponding images through Blender \cite{blender} (Section \ref{conv:pyr_2_rpy} and \ref{sec:blender}) and the 3D-aware Style-GAN-based Panohead \cite{an2023panohead}.

\subsection{Head Pose Estimation} 
HPE can be divided into two categories, with and without facial landmarks. 
At first, head orientation only makes sense when facial features are visible. Dlib \cite{dlib09} is a pioneer with landmarks for HPE. But it will fail if only few key points are detected. With facial landmarks, we can mathematically derive the 3D transformations involved in changing the 2D projected shape of facial landmarks with the help of a ``standard'' face. Later, researchers start to utilize CNN to predict the three Euler angles directly, for example, Hopenet\cite{Ruiz_2018_CVPR_Workshops} and WHENet\cite{zhou2020whenet}. However, the Euler-angle loss easily leads to discontinuity (Figure \ref{fig:gimbal_lock}) because most rotations have multiple correct yet different representations. We will discuss this later. To avoid such discontinuity, 6D-RepNet \cite{Hempel_2022} (including 6D-RepNet360\cite{hempel2023robust}) and TriNet \cite{cao2020vectorbased} predict the $3 \times 2$ and $3 \times 3$ rotation matrices respectively. 6D-RepNet still applies the Euler-angle evaluation metric for comparison with other HPE methods. However, due to the natural scarcity of the labeled data, most of these approaches still utilize datasets created/ synthesized from facial key-points approaches. 

\section{Key Mathematical Definitions}
To extend constrained-range HPE datasets to the full-range head rotation datasets, we must have precise mathematical knowledge on rotation and Euler angles used in later stages of the paper.

\begin{defn} \label{def:rotation_matrix}
A 3D \textbf{rotation matrix} in a 3D coordinate system is a matrix $R \in SO(3)$, where $SO(3)$ is the group of invertible 3 $\times$ 3 matrices such that $det(R) = 1$ and $R \times R^{T} = R^{T} \times R = I_{3}$.  Each column of R represents each rotated axis so that R can be used to rotate objects. Then for every (row) vector $\bm{v} \in \mathbb{R}^{3}$, We assume the application of the rotation is to the left of the vector, i.e. the rotated vector $\bm{v}_{rotated}$ is defined as follows: $\bm{v}_{rotated} = (R \times \bm{v}^{T})^{T}$. 
\end{defn}

\begin{defn} \label{def:extrinsic_def}
Two coordinate systems are widely used: \textbf{intrinsic coordinate system}, where the origin lies at the center of the head and moves/ rotates along with the head; \textbf{extrinsic coordinate system}, where properties coincide with the intrinsic coordinate system initially but does not move along with the head.  The 3D rotation R defined earlier can also be refined by \textbf{extrinsic} and \textbf{intrinsic} rotations accordingly.  \textbf{Intrinsic rotations} \cite{wiki_euler_angle_2023} are elemental rotations that occur about the axes of the intrinsic coordinate system. \textbf{Extrinsic rotations} \cite{wiki_euler_angle_2023} are elemental rotations that occur about the axes of the extrinsic coordinate system. The \textbf{intrinsic coordinate system} remains fixed from the perspective of a targeted human head. In contrast, the \textbf{extrinsic coordinate system} remains fixed from the perspective of external observers.  

The \textbf{Euler angles} \cite{bhlitem38593} are the three elemental angles \textbf{yaw, pitch, and roll} representing the orientation of a rigid body to a fixed coordinate system. We use \textbf{handedness} (the right-handed or left-handed rule) along each axis to define the positive direction of each Euler angle. Note that this is different from the left-handed or right-handed coordinate systems for Gaming engines. Users also need to define the desired order to recover the rotation matrices uniquely. 

To avoid ambiguity, we define $\bm{yaw=0, pitch=0}$, and $\bm{roll=0}$ to represent the straight frontal face as shown on the left in Figure \ref{fig:300w-lp-system2} and restrict $\bm{(-\pi, \pi]}$ to be \textbf{the range of yaw, roll, and pitch}. 

E. Bernardes's \textit{Quaternion to Euler angles conversion} \cite{Bernardes_2022_pone} decomposes the 3D rotation matrix into the product of three elemental (axis-wise) rotations:
\begin{equation}
    \label{formula:40}
    R = R_{\bm{e_{3}}}(\theta_{3}) \times R_{\bm{e_{2}}}(\theta_{2}) \times R_{\bm{e_{1}}}(\theta_{1})
\end{equation}
where $R_{\bm{e_{i}}}(\theta)$ represents a rotation by the angle $\theta$ around the unit axis vector $\bm{e_{i}}$. These consecutive axes must be orthogonal, i.e., $\bm{e_{1}} \cdot \bm{e_{2}} = \bm{e_{2}} \cdot \bm{e_{3}} = 0$.  In addition, $\bm{e_{1}}$, $\bm{e_{2}}$ and $\bm{e_{3}}$ are orthogonal unit vectors and $\bm{e_{3}} = \epsilon \cdot (\bm{e_{1}} \times \bm{e_{2}})$ in which $\epsilon = (\bm{e_{1}} \times \bm{e_{2}}) \cdot \bm{e_{3}} = \pm1$.
\end{defn}

The following Theorem, cited from \cite{hu2024mathematical}, demonstrates the relation between the two rotation types.

\begin{theorem} \label{thm:diff_rots2}
Suppose $\bm{e_{1}}$, $\bm{e_{2}}$, and $\bm{e_{3}}$ are the three orthogonal unit axis vectors defined in Definition \ref{def:extrinsic_def}. Given the elemental rotation sequence, first rotating $\theta_{1}$ along $\bm{e_{1}}$, then rotating $\theta_{2}$ along $\bm{e_{2}}$, last rotating $\theta_{3}$ along $\bm{e_{3}}$,  the intrinsic and extrinsic rotations, denoted by $R_{intrinisc}$ and $R_{extrinisc}$ respectively, can be expressed as follows: 
\begin{equation}
\label{formula:53}
\begin{split}
R_{intrinsic} &= R_{\bm{e_{1}}}(\theta_{1}) \times R_{\bm{e_{2}}}(\theta_{2}) \times R_{\bm{e_{3}}}(\theta_{3}) \\
R_{extrinsic} &= R_{\bm{e_{3}}}(\theta_{3}) \times R_{\bm{e_{2}}}(\theta_{2}) \times R_{\bm{e_{1}}}(\theta_{1}) 
\end{split}
\end{equation}
\end{theorem}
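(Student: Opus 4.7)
The plan is to split the argument by type: the extrinsic case follows almost directly from the left-multiplication convention in Definition \ref{def:rotation_matrix}, while the intrinsic case requires a standard conjugation identity for rotations about transformed axes. I would first set up notation: let $R_i := R_{\bm{e_i}}(\theta_i)$ for $i=1,2,3$, and for any vector $\bm{v}$, write the rotated vector as $R\bm{v}$ (dropping the transpose bookkeeping since the convention is fixed).

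For the extrinsic case, I would argue by direct composition. Since the extrinsic axes $\bm{e_1}, \bm{e_2}, \bm{e_3}$ do not move with the object, each elemental rotation is exactly $R_i$ as a world-frame operator. Applying them sequentially to a vector gives $\bm{v} \mapsto R_1 \bm{v} \mapsto R_2 R_1 \bm{v} \mapsto R_3 R_2 R_1 \bm{v}$, so $R_{extrinsic} = R_3 R_2 R_1$, matching the claim.

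For the intrinsic case, the key tool is the conjugation identity $R_{Q\bm{e}}(\theta) = Q\, R_{\bm{e}}(\theta)\, Q^{-1}$, which holds for any $Q \in SO(3)$ and any unit axis $\bm{e}$; I would state this as a preliminary lemma, either citing it or proving it briefly by observing that $Q R_{\bm{e}}(\theta) Q^{-1}$ fixes $Q\bm{e}$ and acts as a rotation by $\theta$ on the plane orthogonal to $Q\bm{e}$. After the first intrinsic rotation $R_1$, the body's second axis has moved to $\bm{e_2'} = R_1 \bm{e_2}$, so the second intrinsic rotation, as a world-frame operator, is $R_1 R_2 R_1^{-1}$; composing gives $(R_1 R_2 R_1^{-1}) R_1 = R_1 R_2$. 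The third body axis has then moved to $\bm{e_3''} = R_1 R_2 \bm{e_3}$, so the third intrinsic rotation in the world frame is $(R_1 R_2) R_3 (R_1 R_2)^{-1}$, and composing on the left with $R_1 R_2$ telescopes to $R_1 R_2 R_3$, yielding $R_{intrinsic} = R_1 R_2 R_3$.

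I expect the main obstacle to be bookkeeping rather than depth: keeping straight which axis has been transformed by which operator, and making sure the composition order and the left-multiplication convention of Definition \ref{def:rotation_matrix} agree throughout. A small additional care is needed to justify the conjugation identity cleanly if the reader is not willing to take it for granted; otherwise the rest is a two-step telescoping cancellation. An induction on the number of intrinsic rotations could also make the argument extend transparently to longer sequences, but for the three-axis statement the explicit computation above is clearer.
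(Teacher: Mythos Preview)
Your argument is correct and is the standard proof of this classical fact: the extrinsic case follows by straightforward left-composition, and the intrinsic case follows from the conjugation identity $R_{Q\bm{e}}(\theta)=Q\,R_{\bm{e}}(\theta)\,Q^{-1}$ together with a telescoping cancellation. There is nothing to compare against here, because the paper does not supply its own proof of this theorem---it is stated as a citation from \cite{hu2024mathematical} and used as a black box throughout (e.g.\ in the proofs of Theorems~\ref{thm:2d_rotation} and~\ref{thm:flip}).
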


Based on Theorem \ref{thm:diff_rots2} and the fact that matrix multiplications are non-commutative, we can conclude (1) the intrinsic and extrinsic rotation matrices are in reverse order; and (2) \textbf{\emph{the axis-sequence}} (the rotating sequence of the three axes) matters for identifying the multiplicative order of the elemental rotation matrices to compute the real rotation matrix $R$. For example, the intrinsic XYZ-sequence rotation yields   $R_{X} \times R_{Y} \times R_{Z}$ while the intrinsic ZYX-sequence rotation yields $R_{Z} \times R_{Y} \times R_{X}$.
 
So we suggest HPE dataset creators give \textbf{\emph{well-posed rotation system for HPE}} which should composed of the following: (1) Specify the 3D Cartesian coordinate system used, (2) Each axis's handedness and associated Euler angle (if applicable), (3) Range of yaw, pitch, and roll (if applicable), and (4) The yaw, pitch, and roll rotation sequence (if applicable). For simplicity, we also refer a coordinate (or rotation) system A adapting B's system to A's system following B's system disregarding transitions and scaling.   

\section{The 300W-LP Dataset}
300W-LP's Euler-angle labels are widely used for papers on HPE and 3D dense face alignment \cite{DBLP:conf/cvpr/ZhuLLSL16, 3ddfa_cleardusk, guo2020towards, Hempel_2022, hempel2023robust}. 
However, throughout the paper we are unable to find clear definition of coordinate system used. This motivated us to find out which coordinate and rotation system align to 300W-LP's rotation and pose definitions.  The subsequent sections detail the step-by-step process of unraveling the mystery.

\subsection{Derivation of 300W-LP's coordinate system, rotation, and Euler angle definitions} \label{sec:derive_300wlp}
While 300W-LP's code \cite{300w_lp_url} adopts the Base Face Model (BFM) with neck \cite{bfm}, its successor \cite{DBLP:conf/cvpr/ZhuLLSL16} uses the BFM without the ear/neck \footnote{\url{https://github.com/cleardusk/3DDFA_V2/issues/101}} region. Because both BFMs use the same coordinate system, We opted for the latter option as the BFM (since the BFM link provided by \cite{300w_lp_url} is no longer valid), to establish the rotation system for 300W-LP.
From the no-neck BFM, we can identify the coordinate system used in 300W-LP. Next, 300W-LP's code provides the elemental rotation matrices for the axes-sequence and Euler angles. See Formula (\ref{eq:16}) for details. Note that $R^{left}_{X}(p)$ means rotation about $X$-axis in left-handed-rule fashion. 
\begin{equation*}
    R^{left}_{X}(p) = \begin{bsmallmatrix}
    1 & 0 & 0\\
    0 & cos(p) & sin(p)\\
    0 & -sin(p) & cos(p)
    \end{bsmallmatrix}, 
    R^{left}_{Y}(y) = \begin{bsmallmatrix}
    cos(y) & 0 & -sin(y)\\
    0 & 1 & 0\\
    sin(y) & 0 & cos(y)
    \end{bsmallmatrix}, 
    R^{left}_{Z}(r) = \begin{bsmallmatrix}   
    cos(r) & sin(r) & 0\\
    -sin(r) & cos(r) & 0\\
    0 & 0 & 1
    \end{bsmallmatrix} 
\end{equation*}
\begin{equation}
  \label{eq:16}
  \begin{split}
    R_{W}(y, p, r) &= R^{left}_{X}(p) \times R^{left}_{Y}(y) \times R^{left}_{Z}(r) \\
    & = \begin{bsmallmatrix}
    cos(y)cos(r) & cos(y)sin(r) & -sin(y) \\
    -cos(p)sin(r)+sin(p)sin(y)cos(r) & cos(p)cos(r)+sin(p)sin(y)sin(r) & sin(p)cos(y) \\
    sin(p)sin(r)+cos(p)sin(y)cos(r) & -sin(p)cos(r)+cos(p)sin(y)sin(r) & cos(p)cos(y) 
    \end{bsmallmatrix}
  \end{split}
\end{equation}
Combining the coordinate system,  elemental rotations, and intrinsic XYZ-sequence (i.e., pitch-yaw-roll order), we conclude that 300W-LP's Euler angles adapt the left-handed rule rotation. Figure \ref{fig:300w-lp-system2}, shown below, demonstrates our inferred definitions for the coordinate system and Euler angle used in 300W-LP's annotations.
\begin{figure}[H]
\centering
\includegraphics[width=1.0\textwidth]{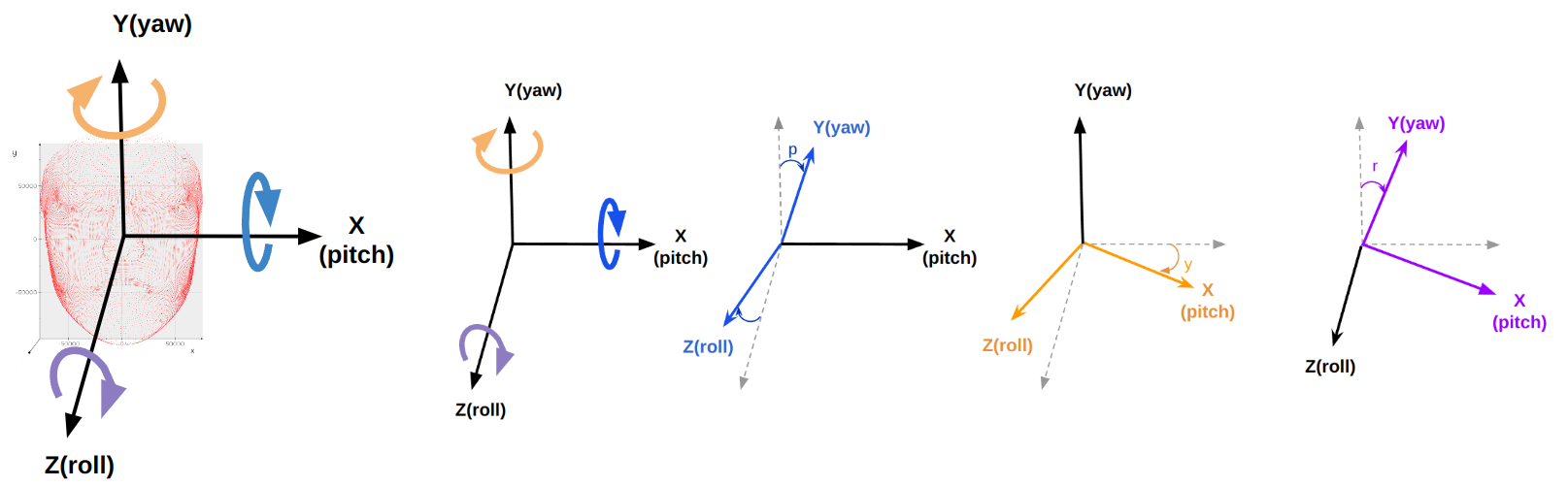}
\caption{The leftmost image depicts 300W-LP’s coordinate system and Euler angles adapts the left-handed rule. Other images illustrate 300W-LP's elemental rotations}
\label{fig:300w-lp-system2}
\end{figure}
\begin{defn}\label{def:300wlp_rotation}
\textbf{300W-LP's rotation system} 
So we infer that 300W-LP adapts the left-handed coordinate system and Euler angle definition shown in Figure \ref{fig:300w-lp-system2} and the intrinsic XYZ axis-sequence (or pitch-yaw-roll order) with Formula (\ref{eq:16}).    
\end{defn}

\subsection{Axes-line Drawing Fueled by 300W-LP-compatible Head Rotations}
\label{our_300wlp_drawing_section}
Later works such as 3DDFA{\_}v2 adopt the scale orthographic projection to draw the 3D pose frustums \cite{3ddfa_cleardusk}, but the most popular drawing method is HopeNet's \cite{Ruiz_2018_CVPR_Workshops} implementation, \textit{draw{\_}axis}() \footnote{\url{https://github.com/natanielruiz/deep-head-pose/blob/master/code/utils.py}} (shown in Figure \ref{fig:draw_axis}). It has been widely used for many HPE works such as WHENet \cite{zhou2020whenet}, 6D-RepNet \cite{Hempel_2022} etc. 

\begin{defn}
For the axes-line drawing, we follow the convention: the \textbf{red line} stretching toward the left side of the frontal face, the \textbf{green line} stretching from the forehead to the center of the jaw, and the \textbf{blue line} pointing from the nose out.  The three lines are perpendicular to each other in the 3-dimensional space, even though sometimes they don't appear so due to the 2D image projection.  
\end{defn}

To directly derive three-axis drawing of a rotation matrix $R_{W}(y, p, r)$ defined in Definition \ref{def:300wlp_rotation}, we first apply a coordinate transformation on $R_{W}(y, p, r)$ to the image coordinate system with the Y-axis pointing downwards. Then project the three column vectors of the resulted rotation $R^{\textbf{draw}}_{W}$ onto the image plane (i.e., XY-plane) to obtain the three RGB lines. 
\begin{equation}
  \label{formula:30}
  \begin{split}
  &R^{\textbf{draw}}_{W} = T_{W} \times R_{W}(y, p, r) \times T^{-1}_{W}, \quad where \quad 
   T_{W} = \begin{bsmallmatrix}
    1 & 0 & 0  \\
    0 & -1 & 0 \\
    0 & 0 & 1
    \end{bsmallmatrix} \\
  &R_{proj} = \begin{bsmallmatrix}   
               1 & 0 & 0\\
               0 & 1 & 0
              \end{bsmallmatrix}  
              \times T_{W} \times R_{W}(y, p, r) \times T^{-1}_{W}
    \end{split}
\end{equation}
\noindent
In Appendix \ref{appendix:300wlp_3line_drawing}, we thoroughly compared our proposed Formula (\ref{formula:30}) (of rotation matrices as inputs) with \textit{draw{\_}axis}() (of pitch-yaw-roll as inputs) and observe that they output the same axes-line drawings.  We also find that Formula (\ref{formula:30}) relaxes \textit{draw{\_}axis}()'s pitch-yaw-roll input limitation because \textbf{\emph{the Euler angles are not necessary for the drawing if the rotation matrix is provided}}. 

\subsection{Extracting the Euler Angles from A Rotation Matrix in 300W-LP's Rotation System}
\label{subsec:extract_300w_euler}
After identifying the rotations utilized in 300W-LP follow Formula (\ref{eq:16}), we can compute the Euler angles from it. We show the complete derivation of our Euler angle extraction in Appendix \ref{appendix:300wlp_euler_extraction}, and present the algorithm (Figure \ref{fig:300wlp_extraction}) for computing the closed-form yaw, roll, and pitch for rotations defined in Definition \ref{def:300wlp_rotation}. Please note that there are more than one correct \emph{pitch-yaw-roll} solutions in most rotations. For details, please refer to Appendix \ref{appendix:300wlp_euler_extraction}.

Issues arise in cases of Gimbal lock. Take the following image from 300W-LP for example, its pitch-yaw-roll label is (-16.090911401458296\degree, -89.9985818251308\degree, -6.854511900533989\degree) and shown leftmost in Figure \ref{fig:gimbal_lock}. Even though the labeled yaw $\ne -90\degree$, its corresponding rotation matrix $M$ does meet the Gimbal lock requirement, i.e., $y = -90\degree$ because the Euler angles extracted from $M$ yield the general pitch-yaw-roll solutions, $(p, -90\degree, r)$ and $p+r=-22.94542388660367\degree$. The right three images in Figure \ref{fig:gimbal_lock} show three possible solutions of Euler angles for the same rotation. So \textbf{\emph{directly learning the Euler angles in this case is a bad idea, and we suggest learning from the rotation matrix representations for the full-range HPE}}.  
\begin{figure}[H]
\centering
\includegraphics[width=0.9\textwidth]{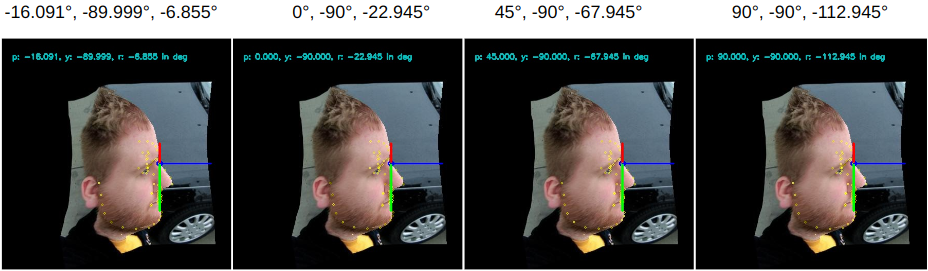}
\caption{A 300W-LP's Gimbal Lock example.}
\label{fig:gimbal_lock}
\end{figure}
Finally, our Euler angle extraction, Figure \ref{fig:300wlp_extraction} in Appendix \ref{appendix:300wlp_euler_extraction}, presents two sets of pitch-yaw-roll solutions for non-Gimbal-lock case. As to which one 300W-LP selected as labels,  according to the dataset's labeled yaw $\in [-90\degree, 90\degree]$, and $y_{1} = \bm{arcsin}(-R_{0,2}) \in [-90\degree, 90\degree]$,  we infer that 300W-LP picked the first.

\section{2D Image Data Augmentation for Head Pose Dataset}\label{sec:data_aug}
Data augmentation is crucial in the training pipeline for computer vision models, enabling them to achieve higher performance without increasing the number of training images. However, in HPE, 2D geometric image augmentations, rotations and flipping, have been avoided due to (1) a lack of clear definition of rotation systems and (2) mathematical derivation of new rotation matrices under 2D geometric augmentations. Obviously, \textbf{\emph{geometric augmentation also enhances angle coverage for free}}. In contrast, pixel-wise image augmentations like Gaussian blur do not modify labeled rotations by any means. In Appendix \ref{appendix: 3dflip_rot},  we derive Theorem \ref{thm:300wlp_2d_rotation} and Corollary \ref{cor:300wlp_flipcor} to get the augmented rotations for 2D geometric transformations. To showcase the effectiveness of our derivations, we applied Corollary \ref{cor:300wlp_flipcor} to different head pose images, as depicted in Figure \ref{fig:pose_augmentation}.

\begin{theorem} \label{thm:300wlp_2d_rotation}
Let's fix to 300W-LP's rotation system. Suppose an image contains a human head, and the head's intrinsic rotation $R$ is given. 
\textbf{Rotating the image by $\phi$ counter-clockwise} results in the roll's elemental rotation $R^{left}_{Z}(-\phi)$ defined in Formula (\ref{eq:16}). Then we have that $R_{rot{\_}img}(\phi)$ satisfies the following formula. 
\begin{equation}
  \label{formula:1}
  \begin{split}
    &R_{rot{\_}img}(\phi)
    =  \begin{bsmallmatrix}
        cos(\phi) & -sin(\phi) & 0\\
        sin(\phi) & cos(\phi) & 0\\
        0 & 0 & 1
       \end{bsmallmatrix} \times R
  \end{split}
\end{equation}
The \textbf{3D rotation $R_{flip}(\theta)$, corresponding to the 2D flipping across the line $L_{\theta}$}, can be expressed as follows:
\begin{equation}
  \label{formula:2}
    R_{flip}(\theta) = \begin{bsmallmatrix}
        cos(2\theta) & sin(2\theta) & 0\\
        sin(2\theta) & -cos(2\theta) & 0\\
        0 & 0 & 1
       \end{bsmallmatrix}  \times R \times \begin{bsmallmatrix}
        -1 & 0 & 0\\
        0 & 1 & 0\\
        0 & 0 & 1
       \end{bsmallmatrix}     
\end{equation}
\end{theorem}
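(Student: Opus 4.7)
The plan is to handle the two claims separately, in each case recasting the 2D image operation as a 3D linear operation on the head (extended trivially in depth) and then extracting the proper-rotation representative under the conventions of Section 3.

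For the rotation formula, I would first observe that, under orthographic projection onto the XY-plane, rotating the image counter-clockwise by $\phi$ is equivalent to rotating the entire 3D scene (the head included) by $\phi$ about the extrinsic Z-axis of the 300W-LP frame; depth is unaffected by a 2D image rotation. This is an \emph{extrinsic} elemental rotation about $\bm{e}_{3}$, so by Theorem \ref{thm:diff_rots2} it acts on $R$ by left multiplication. A direct computation then shows that, in 300W-LP's left-handed convention, $R^{left}_{Z}(-\phi)$ from Formula (\ref{eq:16}) equals the standard counter-clockwise 2D rotation $\begin{bsmallmatrix}\cos\phi & -\sin\phi \\ \sin\phi & \cos\phi\end{bsmallmatrix}$ extended trivially to 3D, which is exactly the left factor of Formula (\ref{formula:1}). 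The claim follows.

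For the flipping formula, my approach is to decompose the 2D image reflection across $L_\theta$ into a 3D improper reflection followed by a correction that restores the result to $SO(3)$. The 2D reflection across the line through the origin at angle $\theta$ is $\begin{bsmallmatrix}\cos 2\theta & \sin 2\theta \\ \sin 2\theta & -\cos 2\theta\end{bsmallmatrix}$; extending it by the identity on Z (depth is preserved by a 2D image flip) yields the left factor $F_\theta$ of Formula (\ref{formula:2}). Applying $F_\theta$ to the head, whose column-wise axes are those of $R$, produces $F_\theta R$, which has determinant $-1$ and therefore is not a physical head rotation. The crucial modeling step is the bilateral symmetry of the face: letting $D=\mathrm{diag}(-1,1,1)$ denote reflection across the intrinsic YZ-plane (the plane of facial symmetry) and $H$ the canonical face, $D(H)=H$, so $F_\theta R(H) = F_\theta R D(H) = (F_\theta R D)(H)$. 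Since $\det(F_\theta R D) = (-1)(1)(-1)=1$, the matrix $F_\theta R D$ lies in $SO(3)$ and is the desired $R_{flip}(\theta)$.

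The main obstacle will be pinning down the correct right-multiplication factor $D$: one must argue that it is the \emph{intrinsic} X-axis of the head, and not some other axis, that must be negated. This rests on the 300W-LP convention (Section \ref{our_300wlp_drawing_section}) identifying the intrinsic X-axis with the facial left--right axis, the unique axis preserved by the face's bilateral symmetry. A secondary task is to verify the sign of $\phi$ in the first claim carefully, since the left-handed convention of Definition \ref{def:300wlp_rotation} can easily flip the direction of the roll. Both are short verifications once the coordinate conventions from Section 3 and Definition \ref{def:300wlp_rotation} are fixed, so I expect the substantive work to be conceptual rather than computational.
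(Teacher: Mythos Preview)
Your proposal is correct and, for the rotation part, essentially identical to the paper's argument: both identify the image rotation as an extrinsic $Z$-rotation, invoke Theorem~\ref{thm:diff_rots2} to place it on the left, and then read off $R^{left}_{Z}(-\phi)$ from Formula~(\ref{eq:16}).

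For the flip, your argument and the paper's reach the same formula by slightly different routes. The paper tracks the axes directly: it computes the images of $\bm{e}_x,\bm{e}_y$ under the 2D reflection to obtain the left factor $F_\theta$, and then observes geometrically that a mirror image reverses the head's intrinsic left--right ($X$) axis, whence the intrinsic correction $\mathrm{diag}(-1,1,1)$ appears on the right via Theorem~\ref{thm:diff_rots2}. You instead note that $F_\theta R$ has determinant $-1$ and must be corrected by some intrinsic reflection, and you single out $D=\mathrm{diag}(-1,1,1)$ as the unique choice for which $D(H)=H$ by bilateral facial symmetry. Your framing makes explicit \emph{why} the $X$-axis (and not $Y$ or $Z$) is the correct intrinsic axis to negate, which the paper leaves as a one-line geometric assertion; conversely, the paper's axis-tracking derivation of $F_\theta$ is slightly more self-contained than quoting the standard 2D reflection matrix. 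The substantive content is the same, and your anticipated ``main obstacle'' (pinning down $D$) is exactly the step the paper dispatches in a sentence.
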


\begin{corollary}\label{cor:300wlp_flipcor} Special cases of Theorem \ref{thm:300wlp_2d_rotation}:
    \begin{enumerate}
    \item Horizontal flipping: $R_{flip}(\pi/2) =  \begin{bsmallmatrix}
                        -1 & 0 & 0\\
                        0 & 1 & 0\\
                        0 & 0 & 1
                       \end{bsmallmatrix}  \times R \times \begin{bsmallmatrix}
                        -1 & 0 & 0\\
                        0 & 1 & 0\\
                        0 & 0 & 1
                       \end{bsmallmatrix}$,
    \item Vertical flipping: $R_{flip}(0) =  \begin{bsmallmatrix}
            1 & 0 & 0\\
            0 & -1 & 0\\
            0 & 0 & 1
           \end{bsmallmatrix}  \times R \times \begin{bsmallmatrix}
            -1 & 0 & 0\\
            0 & 1 & 0\\
            0 & 0 & 1
           \end{bsmallmatrix}$. 
    \item Flipping about both axes:
    $R_{bf} = \begin{bsmallmatrix}
            -1 & 0 & 0\\
            0 & -1 & 0\\
            0 & 0 & 1
           \end{bsmallmatrix}  \times R,$  
    \item The symmetrical flipping about $\bm{L_{\pi /4}}$:  
    $R_{flip}(\pi /4) = \begin{bsmallmatrix}
            0 & 1 & 0\\
            1 & 0 & 0\\
            0 & 0 & 1
           \end{bsmallmatrix}  \times R \times \begin{bsmallmatrix}
            -1 & 0 & 0\\
            0 & 1 & 0\\
            0 & 0 & 1
           \end{bsmallmatrix} ,$ \quad and
    \item Rotating image $\pi/4$ counter-clockwise:
            $R_{rot}(\pi/4) = \begin{bsmallmatrix}
            cos(\pi/4) & -sin(\pi/4) & 0\\
            sin(\pi/4) & cos(\pi/4) & 0\\
            0 & 0 & 1
           \end{bsmallmatrix}  \times R$.     
\end{enumerate}  
\end{corollary}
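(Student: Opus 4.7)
The plan is to obtain each of the five special cases by direct substitution into Theorem \ref{thm:300wlp_2d_rotation}. Cases 1, 2, and 4 are flippings across the vertical, horizontal, and $\pi/4$-diagonal lines respectively, so they follow from evaluating Formula (\ref{formula:2}) at $\theta = \pi/2$, $\theta = 0$, and $\theta = \pi/4$. Case 5 is a counter-clockwise image rotation by $\phi = \pi/4$, so it follows from evaluating Formula (\ref{formula:1}) at $\phi = \pi/4$. Case 3, flipping about both axes, is the only one that requires an extra observation.

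For cases 1, 2, and 4, I would simply read off $\cos(2\theta)$ and $\sin(2\theta)$. At $\theta = \pi/2$ these give $(-1,0)$, so the leading $3 \times 3$ factor in Formula (\ref{formula:2}) collapses to $\mathrm{diag}(-1,1,1)$; at $\theta = 0$ they give $(1,0)$, yielding $\mathrm{diag}(1,-1,1)$; at $\theta = \pi/4$ they give $(0,1)$, producing the upper-left swap block stated in case 4. The trailing right factor $\mathrm{diag}(-1,1,1)$ from Formula (\ref{formula:2}) is preserved unchanged in each of these three cases. Case 5 is immediate: plugging $\phi = \pi/4$ into Formula (\ref{formula:1}) reproduces the displayed expression verbatim.

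The only subtlety is case 3. I would handle it by observing that flipping the image about both axes is the same planar rigid motion as rotating the image counter-clockwise by $\pi$, since the composition of a horizontal and a vertical reflection in the plane is a half-turn. Applying Formula (\ref{formula:1}) with $\phi = \pi$ gives $\cos\phi = -1$ and $\sin\phi = 0$, so the prefactor reduces to $\mathrm{diag}(-1,-1,1)$ and \emph{no} trailing flip factor appears, matching the stated $R_{bf}$. As a cross-check I would also compose cases 1 and 2 directly: applying the vertical-flip rule to the horizontally-flipped rotation cancels the two trailing $\mathrm{diag}(-1,1,1)$ factors and multiplies the two prefactors into $\mathrm{diag}(-1,-1,1)$, giving the same answer.

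The main obstacle is precisely this reconciliation in case 3. Formula (\ref{formula:2}) for a reflection carries a chirality-reversing right factor $\mathrm{diag}(-1,1,1)$, whereas Formula (\ref{formula:1}) for a rotation does not; one must see that the two chirality reversals cancel so that the composition is genuinely a pure rotation whose formula has no trailing factor at all. Once this is recognized, the remainder of the corollary is a routine trigonometric verification.
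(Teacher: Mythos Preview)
Your proposal is correct and matches the paper's intent: the corollary is stated simply as ``Special cases of Theorem~\ref{thm:300wlp_2d_rotation}'' with no separate proof, so direct substitution of $\theta=\pi/2,\,0,\,\pi/4$ into Formula~(\ref{formula:2}) and $\phi=\pi/4$ into Formula~(\ref{formula:1}) is exactly what is expected. Your treatment of case~3---recognizing the double flip as the rotation $\phi=\pi$ in Formula~(\ref{formula:1}), and cross-checking by composing cases~1 and~2 so that the two trailing $\mathrm{diag}(-1,1,1)$ factors cancel---is the only place requiring thought, and you handle it cleanly.
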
 

\subsection{Pitch-yaw-roll to Roll-pitch-yaw Conversion in 300W-LP Coordinate System}
\label{conv:pyr_2_rpy}
During our synthetic HPE data generation, we observed that the resulting Euler angle axis-sequence is roll-pitch-yaw. Appendix \ref{appendx:pyr_2_rpy} presents the solution (Fig. \ref{listing:1}) of converting from 300W-LP's intrinsic pitch-yaw-roll order to intrinsic roll-pitch-yaw order. Fig. \ref{listing:1} and rotation augmentations through Formula (\ref{formula:1}) guarantees the nonzero-roll head pose synthetic image generation is capable of mimicking any real image's pitch-yaw-roll head pose through Blender or Panohead (demonstrated in Figure \ref{fig:pano_cmu1}). This Euler angle conversion will be used later in our experiments and implies that for dataset creators, \textbf{\emph{a good coverage in pitch and yaw are enough and our augmentation techniques fill up the gaps in the missing distribution of rolls}}. 
Please note Appendix \ref{appendx:pyr_2_rpy} also allows the following: \emph{given an existing head pose image with orientation $R$, we can fabricate a synthetic image with orientation exact equals $R$ using Blender 3D engine or Panohead rendering.} This easily facilitates the creation of a positive sample given an anchor and opens the door in the context of contrastive learning.

\section{Experiments}
We present two experiments to show the usefulness of our mathematical derivations. The first is to generate a wide pitch-yaw range of synthetic head-region images with 300W-LP-compatible rotation matrix labels, and we can ``densify'' the roll range of the dataset with technique, Fig. \ref{listing:1} in Appendix \ref{appendx:pyr_2_rpy}. The second is to show our flipping/ rotation augmentations enhance standard head pose estimation networks' performance.

\subsection{A Toy Synthetic HPE Dataset with Blender: Pitch-Yaw Coverage is All You Need}
\label{sec:blender}
Figure \ref{fig:blender_setting} demonstrates how we generate human head images and camera extrinsics through the Blender Python script. We put a human head mesh model in the center and set camera to scan along a spiral path to ensure good pitch-yaw coverage. Then we convert Blender's camera extrinsic outputs into 300W-LP-compatible rotations. But this process only produces zero-roll rotations. So we use the Euler angle conversion mentioned in Section \ref{conv:pyr_2_rpy} to augment and increase the roll coverage.  

To demonstrate our point, we generate 1440 snapshot images illustrated in Figure \ref{fig:blender_setting}. Then we augment the 1440 pitch-yaw-only images with random rotation/ flipping angles to 2880 ``augmented'' images.  We also randomly sample 1440 rotation matrices through \textbf{\emph{scipy.spatial.transform.Rotation.random}} package \cite {2020SciPy-NMeth}. Followed by inputting a total of $1440 + 2880 + 1440$ samples into \textbf{\emph{TensorBoard}} \cite{tensorflow2015-whitepaper}. The result is shown in Figure \ref{fig:tsne}.

We observe that our augmentations significantly extended the limited range of Blender sampled head poses (the orange dots as shown in the middle left) to nearly any possible random rotations (shown in the middle right). Our augmentations also covered the full-range rotations. To our surprise, the augmented rotations exhibit a significantly uniform distribution under Principle Component Analysis (PCA) compared to random rotations, as illustrated in Figure \ref{fig:tsne}.

\begin{figure}[H]
\centering
\includegraphics[width=0.4\textwidth]{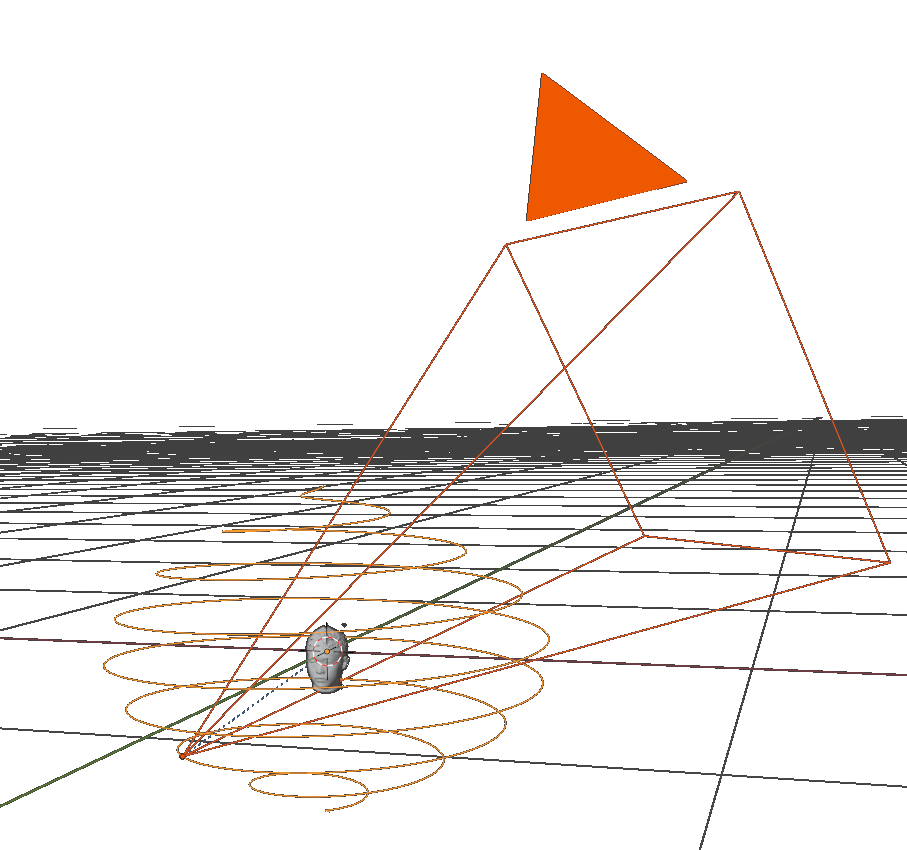}
\caption{The human head mesh model is in center, the camera moves along the spiral path and 1440 snapshots are taken with their rotation matrices recorded. During this process, the camera has no rotation so all images generated has zero rolls for the head pose.}
\label{fig:blender_setting}
\end{figure} 

\begin{figure}[H]
\centering
\includegraphics[width=0.8\textwidth]{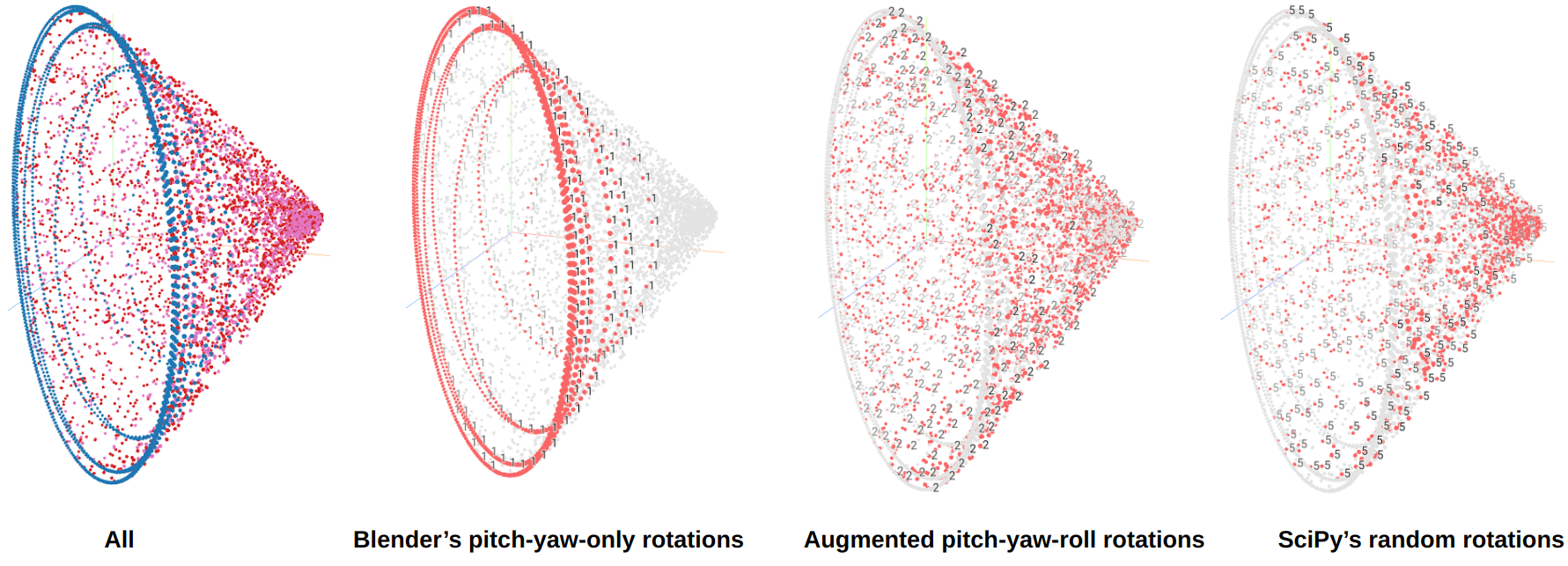}
\caption{PCA visualization for rotations. All $3 \times 3$ matrices are flattened to 9-dim vectors and projected to 3D by PCA. The leftmost shows all rotation samples, and the blue spiral dots represent Blender's generated rotations.}
\label{fig:tsne}
\end{figure}

\subsection{Flipping/Rotation Augmentations Improve Existing Network Performance}
\label{sec:aug_performance}
To the best of our knowledge, 6D-RepNet was the first to incorporate the Geodesic loss in head pose neural network training. However, Mean absolute value (MAE) was still used as evaluation metric (for comparing with Euler-angle labeled ground truths) for the mean pitch-yaw-roll values. Let's re-state the definition first.
\begin{defn}
Define the \textbf{Geodesic distance} between two rotation matrices  $A, B \in SO(3)$ to be the following:
\begin{equation}
    \label{gedesic_dist}
    d(A, B) = cos^{-1}(\frac{tr(A \times B^{T})-1}{2}) 
\end{equation}
where $tr$ and $T$ represent the trace and transpose, respectively.  
\end{defn}
This definition precisely captures the angle (in radian) between every pair of 3D rotation matrices since geodesics on the unit sphere are sections of big circles directly corresponding to angles to the center of the sphere. So we adopt \textbf{\emph{mean geodesic error}} as the evaluation metric since it avoids the discontinuity problems caused by the Gimbal locks that the conventional MAE evaluation metric suffers. 

However, we notice that 6D-RepNet's \cite{Hempel_2022} implementation restricts the Euler angles within $\pm99\degree$. Furthermore, Hu et al. \cite{hu2024mathematical} points out that 6D-RepNet's derived rotation matrices $R$, are actually $R^t$ (although at the Euler angle extraction stage, $R$ was implicitly transposed). Thus we modified 6D-RepNet's matrix computation according to Formula (\ref{eq:16}), replaced its Euler-angle extraction with Appendix \ref{appendix:300wlp_euler_extraction}, and added our 2D geometric augmentation feature described earlier in Section \ref{sec:data_aug}, and slightly revised the dataset implementation to enable more dataset comparison. We call this new version \textbf{\emph{6D-RepNet+}} as it can fully utilize the Geodesic distance (Definition \ref{gedesic_dist}) and mean geodesic error not only for the training but also for evaluation of pitch-yaw-roll ground-truth labels. It also allows the training with any \textbf{\emph{augmented}} general head pose orientation (e.g. upside down). 

Please note we do not change the RepVGG \cite{ding2021repvgg} backbone, or any other part of 6D-RepNet's neural network, so 6D-RepNet+ is not a competing network architecture. Our intention is to easily facilitate our experiments, taking into account the 6D-RepNet360\cite{hempel2023robust}'s training code was not released, and we also aim to standardize all computations under 300W-LP's rotation system.

We conducted all 6D-RepNet+'s experiments on NVIDIA V100 GPU with 32 GB of RAM, and all 6D-RepNet+ models are trained on the 300W-LP dataset with the batch size = 250, number of epochs = 40, learning rate = 0.0001, $1\%$ Gaussian Blur, and the mean geodesic loss. To fairly compare with 6D-RepNet without head pose augmentation, we set the angle parameter $\bm{0\degree}$ representing 6D-RepNet's training without the (vertical) flipping. For every non-zero angle parameter, e.g. $\bm{20\degree}$, 6D-RepNet+ performs $50\%$ rotation with Formula (\ref{formula:1}) of a randomly chosen angle within $\pm20\degree$, and $50\%$ vertical flipping with Formula (\ref{formula:2}) of the other randomly chosen angle within $[90\degree-20\degree, 90\degree]$ to slightly alter the left-right flipping augmentation a little. 

\begin{table}[H]
    \caption{Range of Euler angles for datasets we used in our experiments.}
    \centering
    \resizebox{0.6\textwidth}{!}{%

    \begin{tabular}{ccccc}
         \toprule
         \textbf{Dataset}  & \textbf{pitch}  & \textbf{yaw} & \textbf{roll} & \textbf{size}\\
         \midrule
         300W-LP limited-range  &  (-99\degree, 97\degree) & [-90\degree, 90\degree] & (-98\degree, 98\degree) & 122,415 \\
         \midrule
         300W-LP full-range     &  (-167\degree, 107\degree) & [-90\degree, 90\degree] & (-147\degree, 129\degree) & 122,450\\
         \midrule
         AFLW2000 limited-range & (-95\degree, 89\degree) & (-84\degree, 87\degree)   & (-91\degree, 93\degree)  & 1,969  \\
         \midrule
         AFLW2000 full-range & (-152\degree, 163\degree) & (-86\degree, 87\degree)   & (-162\degree, 154\degree)   & 2,000 \\
         \midrule
         BIWI              & (-63\degree, 78\degree) & (-75\degree, 77\degree) & (-57\degree, 45\degree) & 13,219\\
         \midrule
         BIWI\_Aug          & (-77\degree, 74\degree) & (-72\degree, 76\degree\degree) & (-54\degree, 104\degree)  & 13,219\\
         \midrule
         CMU\_HPE\_10K       & (-180\degree, 180\degree]      & [-90\degree, 90\degree]   & (-180\degree, 180\degree]    & 10,466 \\
         \bottomrule
    \end{tabular}
    }
    \label{table:euler_range_limits}
\end{table}
Most common HPE datasets are limited-range, as shown in Table \ref{table:euler_range_limits}. For our second experiment, we prepare two versions of 300W-LP and AFLW2000 (note we didn't do this for BIWI as its Euler-angle range is already very limited), \textbf{\emph{limited}}-range and \textbf{\emph{full}}-range datasets. The limited-range datasets only provide the limited head pose range, all Euler angles are within $\pm99\degree$, and the full-range datasets remove this restriction.  To compensate the head pose range imposed by 300W-LP, AFLW2000, and BIWI, we created two datasets, BIWI{\_}Aug and CMU{\_}HPE{\_}10K. We construct \textbf{\emph{BIWI{\_}Aug}} dataset by applying up to $20\degree$ augmentation described above on the BIWI dataset. And the CMU{\_}HPE{\_}10K dataset covers the complete and uniform head rotation spectrum.  Our experiment results with 6D-RepNet+ are shown in Table \ref{6d+_epxs}.

In the process of CMU{\_}HPE{\_}10K creation, we adapted Formula (\ref{formula:hu_formula}) \cite{hu2024mathematical} and applied a rotation $E_{ref}$ (instead of the coordinate transformation used by WHENet's rotation generation) on $C_{extr} \times R_{Horn}$ to map the computed rotation to the image space. We found this formula greatly improved the head pose extraction. More details in Figure \ref{fig:cmu}.
\begin{equation}
    \label{formula:hu_formula}
    \begin{split}
    &E_{ref} =\begin{bsmallmatrix}
            1 & 0 & 0 \\
            0 & -1 & 0 \\
            0 & 0 & -1
          \end{bsmallmatrix}, \\
    &R_{panoptic} \coloneqq E_{ref} \times C_{extr} \times R_{Horn}.
    \end{split}
\end{equation}
where $C_{extr}$ is the camera extrinsic and $R_{Horn}$ is the rotation extracted by the Horn's method \cite{article3} from the base face model to OpenPose's \cite{8765346} face model. 
\begin{figure}[H]
\centering
\includegraphics[width=1.0\textwidth]{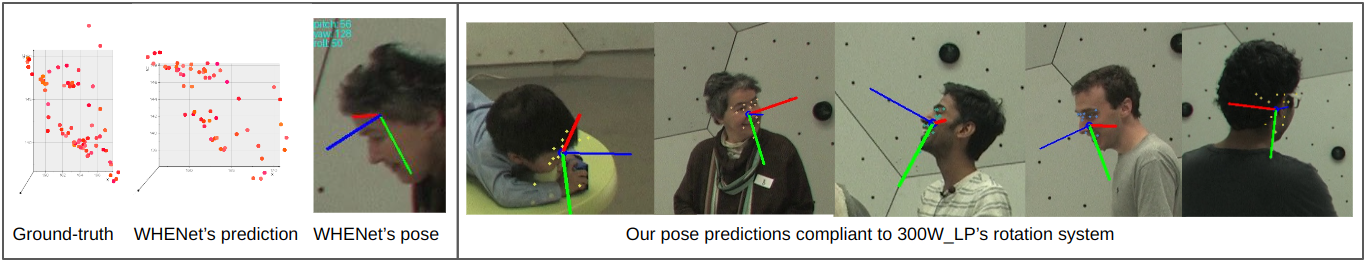}
\caption{From \cite{hu2024mathematical}, the red line on the left panel with WHENet's method should stretch to the right, and the right panel with Formula \ref{formula:hu_formula} greatly improves the head pose extraction.}
\label{fig:cmu}
\end{figure}

\begin{table}[H]
\caption{Our augmented rotation and flipping experiments. We use the mean geodesic error as the evaluation metric,  so the evaluated errors are in radians. \textbf{Boldface} denotes the best-performing augmentation under limited or full-range setting for the specific dataset. For the improvements, radian has been converted to degree. }
\resizebox{\textwidth}{!}{
\centering
\begin{tabular}{c|c|cc|cc|cc|cc|cc}   
    \toprule
    \multirow{2}{*}{\textbf{Angle}} & \multirow{2}{*}{\textbf{Setting}} & \multicolumn{2}{c}{\textbf{AFLW2000-limited}}   & \multicolumn{2}{c}{\textbf{AFLW2000-full-range}}    & \multicolumn{2}{c}{\textbf{BIWI}}&   \multicolumn{2}{c}{\textbf{BIWI\_aug}}   & \multicolumn{2}{c}{\textbf{CMU\_HPE\_10K}} \\
    \cmidrule(lr){3-4} \cmidrule(lr){5-6} \cmidrule(lr){7-8} \cmidrule(lr){9-10} \cmidrule(lr){11-12}
     & & \textbf{Radian} & \textbf{Improv.} & \textbf{Radian} & \textbf{Improv.} & \textbf{Radian} & \textbf{Improv.} & \textbf{Radian} & \textbf{Improv.} & \textbf{Radian} & \textbf{Improv.} \\
     \midrule
     \textbf{0\degree} & limited & 0.1147 & - & 0.1215 & - & 0.1222 & - & 0.7251 & - & 0.8216 & - \\
     \textbf{0\degree} & full & 0.1153 & - & 0.1228 & - & 0.1263 & - & 0.7279 & - & 0.7939 & - \\
     \midrule
     \textbf{3\degree} & limited & 0.1054 & 0.5339\degree & 0.1126 & 0.5109\degree & \textbf{0.1178} & \textbf{0.2525}\degree & 0.7099 & 0.8684\degree & 0.7703 & 2.9431\degree \\
     \textbf{3\degree} & full & 0.1085 & 0.3896\degree & 0.1155 & 0.4146\degree & 0.1240 & 0.1300\degree & 0.6713 & 3.2432\degree & 0.7582 & 2.0479\degree \\
     \midrule
     \textbf{6\degree} & limited & 0.1064 & 0.4781\degree & 0.1118 & 0.5545\degree & 0.1304 & -0.4681\degree & 0.7138 & 0.6442\degree & 0.7253 & 5.5213\degree \\
     \textbf{6\degree} & full & 0.1093 & 0.3452\degree & 0.1159 & 0.3918\degree & 0.1277 & -0.0818\degree & 0.6859 & 2.4107\degree & 0.7186 & 4.3162\degree \\
     \midrule
     \textbf{15\degree} & limited & \textbf{0.1054} & \textbf{0.5372}\degree & 0.1113 & 0.5879\degree & 0.1261 & -0.2208\degree & 0.6637 & 3.5163\degree & 0.6548 & 9.5659\degree \\
     \textbf{15\degree} & full & 0.1083 & 0.4005\degree & 0.1132 & 0.5488\degree & 0.1299 & -0.2064\degree & 0.6738 & 3.0998\degree & 0.6403 & 8.8036\degree \\
     \midrule
     \textbf{20\degree} & limited & 0.1087 & 0.3446\degree & 0.1145 & 0.4012\degree & 0.1360 & -0.7889\degree & 0.6747 & 2.8895\degree & \textbf{0.5488} & \textbf{15.6408}\degree \\
     \textbf{20\degree} & full & 0.1060 & 0.5336\degree & \textbf{0.1117} & \textbf{0.6328}\degree & 0.1260 & 0.0161\degree & \textbf{0.6367} & \textbf{5.2259}\degree & 0.6601 & 7.6713\degree \\
    \bottomrule
\end{tabular}%
\label{6d+_epxs}
}
\end{table}

From Table \ref{6d+_epxs}, we clearly see that our general flipping/ rotations augmentations indeed improve training performance except for BIWI above 6\degree augmentations. We argue this is because BIWI has very limited Euler-angle range as in Table \ref{table:euler_range_limits} and the performance of BIWI\_aug supports our point.

\begin{figure}[H]
\centering
\includegraphics[width=1.0 \textwidth]{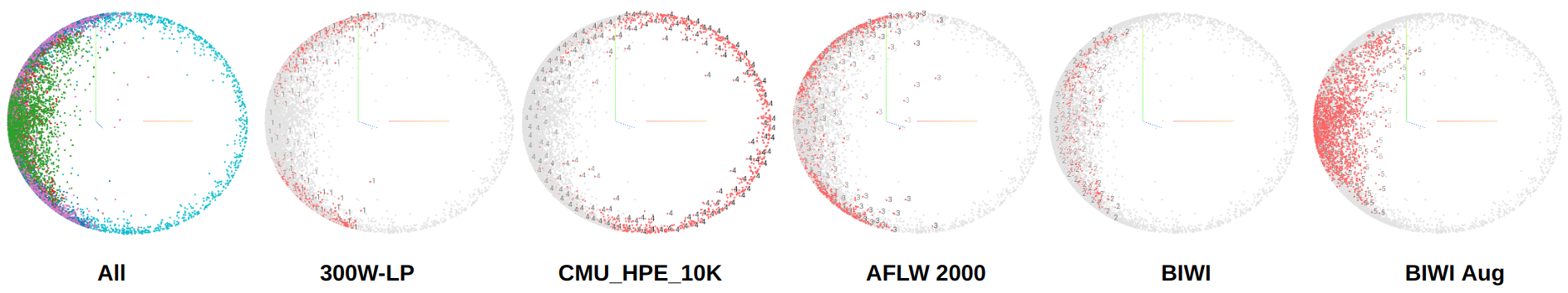}
\caption{PCA of each of the datasets we used in our experiments. Apparently, BIWI has an even smaller distribution profile than 300W-LP, which is enlarged by BIWI\_aug.}
\label{fig:4_datasets}
\end{figure} 

\section{Conclusion}
In this paper, we presented method for inferring coordinate system from 300W-LP source code, method for Euler angle application order determination and formulae for extracting precise rotation matrices and the Euler angles. Based on the clarified 300W-LP coordinate system and Euler angle orientations/ application orders,  we derived formula for 2D rotation/ flipping augmentations of the 300W-LP compatible rotation matrices. Additionally, we also provided derivations for the correct drawing routines for rotation matrices that are independent of Euler angles, which addressed the problems that drawing routines are often borrowed from code designed for a different coordinate system.

We demonstrated the usefulness of our result by constructing a toy synthetic head pose dataset and demonstrated both mathematically and experimentally that a dataset with a good pitch-yaw coverage with zero roll can be augmented to (non-zero roll) full-range dataset. Adding augmentations to existing head pose training pipelines also improves performance.

\bibliographystyle{plain}
\bibliography{neurips.bib}

\newpage

\appendix

\section{Validation of HopeNet's draw{\_}axis() }
\label{appendix:300wlp_3line_drawing}
\begin{figure} [H]
\centering
\includegraphics[width=0.9\textwidth]{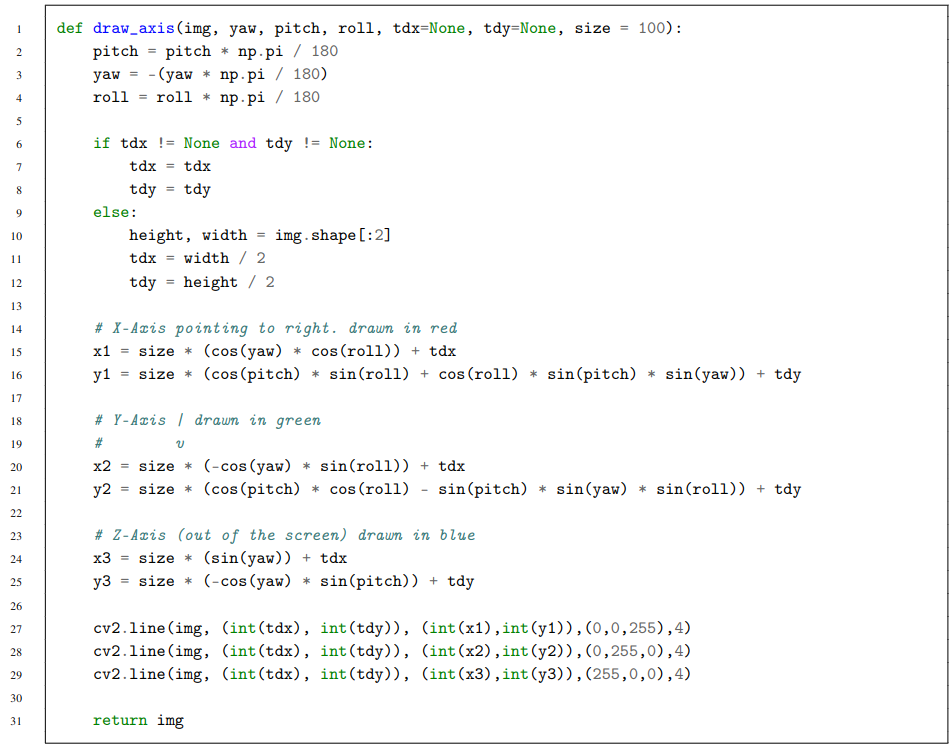}
\caption{HopeNet's \textit{draw{\_}axis} code}
\label{fig:draw_axis}
\end{figure}
While HopeNet's \textit{draw{\_}axis}() (see Figuer \ref{fig:draw_axis}) is popular and seems to depict three lines well, we still need to prove its drawing correct. Due to our earlier discussion in Section \ref{our_300wlp_drawing_section}, we have successfully derived Formula (\ref{formula:30}). Let's expand the formula and substitute $R_{W}$ by Formula (\ref{eq:16}) and $T_{W}$ into $R^{draw}_{W}$. Then we get the following formula:
\begin{equation}
\label{formula:44}
R^{draw}_{W} = \begin{bsmallmatrix}
    cos(y)cos(r) & -cos(y)sin(r) & -sin(y) \\
    cos(p)sin(r)-sin(p)sin(y)cos(r) & cos(p)cos(r)+sin(p)sin(y)sin(r) & -sin(p)cos(y) \\
    sin(p)sin(r)+cos(p)sin(y)cos(r) & sin(p)cos(r)-cos(p)sin(y)sin(r) & cos(p)cos(y) 
\end{bsmallmatrix}. 
\end{equation}
To project the three columns of $R^{draw}_{W}(p, y, r)$ onto the (image's) XY-plane, we can reduce the matrix to the following:
\begin{equation}
\label{formula:45}
R^{draw}_{img}(p, y, r) = \begin{bsmallmatrix}
    cos(y)cos(r) & -cos(y)sin(r) & -sin(y) \\
    cos(p)sin(r)-sin(p)sin(y)cos(r) & cos(p)cos(r)+sin(p)sin(y)sin(r) & -sin(p)cos(y)
\end{bsmallmatrix},
\end{equation}
and the XY-projections for all 3 unit axis vectors are as follows:
\begin{equation}
\label{formula:46}
\begin{split}
\bm{x} &= \begin{bsmallmatrix}
    cos(y)cos(r) \\
    cos(p)sin(r)-sin(p)sin(y)cos(r)
\end{bsmallmatrix}, 
\bm{y} = \begin{bsmallmatrix}
    -cos(y)sin(r) \\
    cos(p)cos(r)+sin(p)sin(y)sin(r) 
\end{bsmallmatrix},
\bm{z} = \begin{bsmallmatrix}
    -sin(y) \\
    -sin(p)cos(y) 
\end{bsmallmatrix}
\end{split}
\end{equation}
To accommodate Line 3's turning yaw to -yaw in Figure \ref{fig:draw_axis}, we replace $cos(y) = cos(-y)$ and $sin(y) = -sin(-y)$ into Formula \ref{formula:46} to get Formula (\ref{formula:47}). Next, follow Line 3 set $\tilde{y} = -y$ in Formula (\ref{formula:47}) and we get Formula (\ref{formula:48} )
\begin{equation}
\label{formula:47}
\begin{split}
\bm{x'} &= \begin{bsmallmatrix}
    cos(-y)cos(r) \\
    cos(p)sin(r)+sin(p)sin(-y)cos(r)
\end{bsmallmatrix},
\bm{y'} = \begin{bsmallmatrix}
    -cos(-y)sin(r) \\
    cos(p)cos(r)-sin(p)sin(-y)sin(r) 
\end{bsmallmatrix},
\bm{z'} = \begin{bsmallmatrix}
    sin(-y) \\
    -sin(p)cos(-y)
\end{bsmallmatrix};
\end{split}
\end{equation}
\begin{equation}
\label{formula:48}
\begin{split}
    \begin{bsmallmatrix}
        \textrm{x1} \\ 
        \textrm{y1}
    \end{bsmallmatrix} &= \begin{bsmallmatrix}
    cos(\tilde{y})cos(r) \\
    cos(p)sin(r)+sin(p)sin(\tilde{y})cos(r)
    \end{bsmallmatrix},
    \begin{bsmallmatrix}
        \textrm{x2} \\ 
        \textrm{y2}
    \end{bsmallmatrix}  = \begin{bsmallmatrix}
    -cos(\tilde{y})sin(r) \\
    cos(p)cos(r)-sin(p)sin(\tilde{y})sin(r) 
    \end{bsmallmatrix},
    \begin{bsmallmatrix}
        \textrm{x3} \\ 
        \textrm{y3}
    \end{bsmallmatrix}  = \begin{bsmallmatrix}
    sin(\tilde{y}) \\
    -sin(p)cos(\tilde{y}) 
    \end{bsmallmatrix}.
\end{split}
\end{equation}
Since Formula (\ref{formula:48}) completely matches with Lines 15-25 of Figure \ref{fig:draw_axis}, we proved \textbf{\textit{draw{\_}axis}() equals to our proposed drawing routine (Formula (\ref{formula:46}))}. 
\begin{table*}
    \begin{minipage}{0.4\linewidth}
        \label{table:300wlp_labels}
        \centering
\begin{tabular}{|c|c|c|c|} 
 \hline
 pose & left & middle & right \\ [0.1ex] 
 \hline
 pitch & 6.208 & -17.325 & -7.601 \\ [0.1ex] 
 \hline 
 yaw & 5.876 & -49.589 & -54.009 \\ [0.1ex] 
 \hline
 roll & -1.694 & 11.423 & 4.450 \\ [0.1ex] 
 \hline
\end{tabular}
     \caption{Head pose labels for Figure \ref{fig:3drawings}}
    \end{minipage}
    \begin{minipage}{0.5\linewidth}
    \begin{figure}[H]
        \centering
            \includegraphics[width=.8\textwidth]{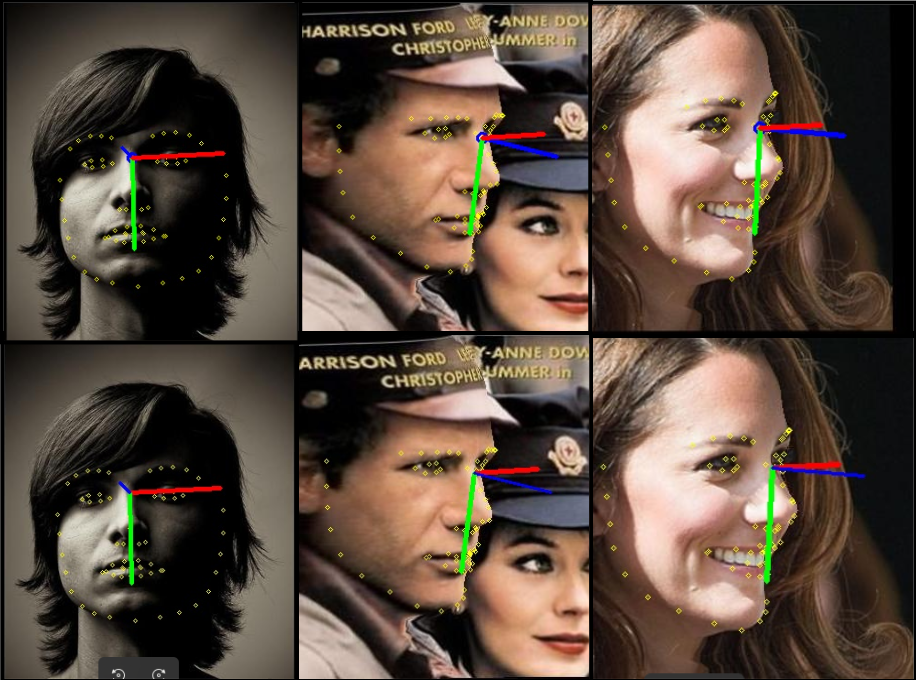}
    \caption{Draw Table 1's head poses with our drawing routine (top) and \textbf{draw{\_}axis} (down). Images are from 300W-LP dataset.}
    \label{fig:3drawings}
    \end{figure}
    \end{minipage}
\end{table*}

\section{Euler Angle Extraction for Rotations in 300W-LP's Rotation System}
\label{appendix:300wlp_euler_extraction}
We follow Slabaugh, G. G's \cite{slabaugh1999computing} Euler angles' computation except that the range $(-\pi, \pi]$ for the yaw, pitch, and roll values are selected. Although for the pitch and roll, we mainly focus on $[-\pi/2, \pi/2]$ because most head poses occur in this range, including the upside-down human head pose as a result of image augmentations, we cannot eliminate the possibility that true full-range may be useful in some occasions. We also use NumPy \footnote{\url{https://numpy.org}} for trigonometric calculation in code implementations.  

For a given rotation $R=(R_{i,j})$ from 300W-LP where $R_{i,j}$ is the $(i,j)$-the entry in R, we can set up the matrix equation (\ref{eq:17}) to solve for yaw, pitch, and roll.  
\begin{equation}
\label{eq:17}
\begin{split}
(R_{i,j}) &= \begin{bsmallmatrix}
    cos(y)cos(r) & cos(y)sin(r) & -sin(y) \\
    -cos(p)sin(r)+sin(p)sin(y)cos(r) & cos(p)cos(r)+sin(p)sin(y)sin(r) & sin(p)cos(y) \\
    sin(p)sin(r)+cos(p)sin(y)cos(r) & -sin(p)cos(r)+cos(p)sin(y)sin(r) & cos(p)cos(y) 
\end{bsmallmatrix} 
\end{split}
\end{equation}

\bigbreak
Equation $R_{0,2} = -\bm{sin}(y)$ gives yaw's first solution $y_{1} = \bm{arcsin}(-R_{0,2}) \in [-\pi/2, \pi/2]$ due to Numpy's $\bm{arcsin}$ function call.  To restrict both yaws in $(-\pi, \pi]$, we derive the second formula of yaw in Formula (\ref{eq:18}):

\begin{equation}
\label{eq:18}
  y_{2} =
    \begin{cases}
      \pi - y_{1} &  if \verb|  | y_{1} \ge 0  \\
      -\pi - y_{1} & if \verb|  |  y_{1} < 0  
    \end{cases}       
\end{equation}
\noindent
Gimbal locks happen when $\bm{cos}(y) = 0$, i.e, $y_1 = y_2 \text{ and } |y_i| = \pi/2$.  We will discuss it later. 

For the non-Gimbal-lock cases, equivalently $\bm{cos}(y) \neq 0$, we consider the two equations shown in Formula (\ref{eq:19}).
\begin{equation}
\label{eq:19}
    \begin{cases}
      R_{1,2} &= \bm{sin}(p)\bm{cos}(y) \\
      R_{2,2} &= \bm{cos}(p)\bm{cos}(y)
    \end{cases}       
\end{equation}
Adapting to Numpy's \cite{2020NumPy-Array} $\bm{arctan2}$ \cite{arctan} makes the solution unique. When applying $\bm{arctan2}$, we must be careful when $\bm{cos}(y) < 0$, as mentioned in \cite{slabaugh1999computing}.  It will result in the sign change for both terms, $\bm{sin}(p)$ and $\bm{cos}(p)$, and lead to the wrong angle prediction, $p \pm \pi$, instead of $p$.  To avoid such a pitfall, we set pitch $p = \textbf{arctan2}(R_{1,2}/\bm{cos}(y), R_{2,2}/\bm{cos}(y))$, which results in $p$ in $[-\pi, \pi]$. From Numpy's $\bm{arctan2}$’s document \cite{arctan}, $p$ can be further restricted into the range $(-\pi, \pi)$ due to both numerator and denominator can’t be $\pm \inf$ and $\pm 0$ at the same time.  Now, we can formulate pitch as follows:
\begin{equation}
\label{eq:20}
    \begin{cases}
      p_{1} &= \bm{arctan2}(R_{1,2}/\bm{cos}(y_{1}), R_{2,2}/\bm{cos}(y_{1}))\\
      p_{2} &= \bm{arctan2}(R_{1,2}/\bm{cos}(y_{2}), R_{2,2}/\bm{cos}(y_{2}))
    \end{cases}       
\end{equation}
\noindent
Formula (\ref{eq:18}) implies that $\bm{cos}(y_2) = -\bm{cos}(y_1)$. Now substitute it back to Formula (\ref{eq:20}), then we get $\bm{sin}(p_2) = -\bm{sin}(p_1)$ and $\bm{cos}(p_2) = -\bm{cos}(p_1)$.  Hence $p_{1}$ and $p_{2}$ differs by $\pi$.  To get both of them inside $(-\pi, \pi]$,  we get a simplified formula for pitch, shown in Formula (\ref{align:21-22}):  
\begin{align} 
  \label{align:21-22}
    p_{1} &= \bm{arctan2}(R_{1,2}/\bm{cos}(y_{1}), R_{2,2}/\bm{cos}(y_{1}))\\
    p_{2} &= \begin{cases}
               p_{1} - \pi &\textbf{  if } p_{1} \ge 0 \\
               p_{1} + \pi &\textbf{  if } p_{1} < 0 
            \end{cases}   
\end{align}
\noindent
Similarly, we can derive the formula for roll, as shown in Formula (\ref{align:23-24}):
\begin{align} 
  \label{align:23-24}
    r_{1} &= \bm{arctan2}(R_{0,1}/\bm{cos}(y_{1}), 
              R_{0,0}/\bm{cos}(y_{1}))\\
    r_{2} &= \begin{cases}
               r_{1} - \pi &\textbf{  if } r_{1} \ge 0 \\
               r_{1} + \pi &\textbf{  if } r_{1} < 0 
            \end{cases}   
\end{align}
Gimbal locks occur at $y=\pm \pi/2$. If $y = \pi/2$, $\bm{sin}(y) = 1$ and $\bm{cos}(y)=0$ reduce the matrix (\ref{eq:17}) into Formula (\ref{eq:25}). In which case, we also find that $R_{1,0} = \bm{sin}(p - r)$ and $R_{1,1} = \bm{cos}(p - r)$.  So we have $p-r=\bm{arctan2}(R_{1,0}, R_{1,1})$. While there are infinitely many solutions for $p$ and $r$, to prevent $|p| > \pi/2$, we decide to set $p =  \bm{arctan2}(R_{1,0}, R_{1,1}) / 2 $ and $r = -p$ to force them to fall within $[-\pi/2, \pi/2]$.  
\begin{equation}
\label{eq:25}
\begin{split}
R(p, \pi/2, r) &= \begin{bsmallmatrix}
    0 & 0 & -1 \\
    -cos(p)sin(r)+sin(p)cos(r) & cos(p)cos(r)+sin(p)sin(r) & 0 \\
    sin(p)sin(r)+cos(p)cos(r) & -sin(p)cos(r)+cos(p)sin(r) & 0
    \end{bsmallmatrix} \\ \\
    p &=  \bm{arctan2}(R_{1,0}, R_{1,1}) / 2,  \textbf{   } r = -p
\end{split}
\end{equation}
The other Gimbal lock case is when $y = -\pi/2$. Then the matrix in Formula (\ref{eq:17}) becomes Formula (\ref{eq:26}). Similarly, we can derive $R_{1,0}=-\bm{sin}(p+r)$, $R_{1,1} = \bm{cos}(p+r)$ and $p+r = \bm{arctan2}(-R_{1,0}, R_{1,1})$.  Again, to make both $p$ and $r$ in $[-\pi/2, \pi/2]$, we set $p=r=\bm{arctan2}(-R_{1,0}, R_{1,1}) / 2$.
\begin{equation}
\label{eq:26}
\begin{split}
R(p, -\pi/2, r) &= \begin{bsmallmatrix}
    0 & 0 & 1 \\
    -cos(p)sin(r)-sin(p)cos(r) & cos(p)cos(r)-sin(p)sin(r) & 0 \\
    sin(p)sin(r)-cos(p)cos(r) & -sin(p)cos(r)-cos(p)sin(r) & 0
    \end{bsmallmatrix}, \\ \\
    p =  r &= \bm{arctan2}(-R_{1,0}, R_{1,1}) / 2 \\
\end{split}
\end{equation}
Figure \ref{fig:300wlp_extraction} presents the full Python code for computing the closed-form yaw, roll, and pitch for the 300W-LP. 
\begin{figure} [H]
\centering
\includegraphics[width=0.85\textwidth]{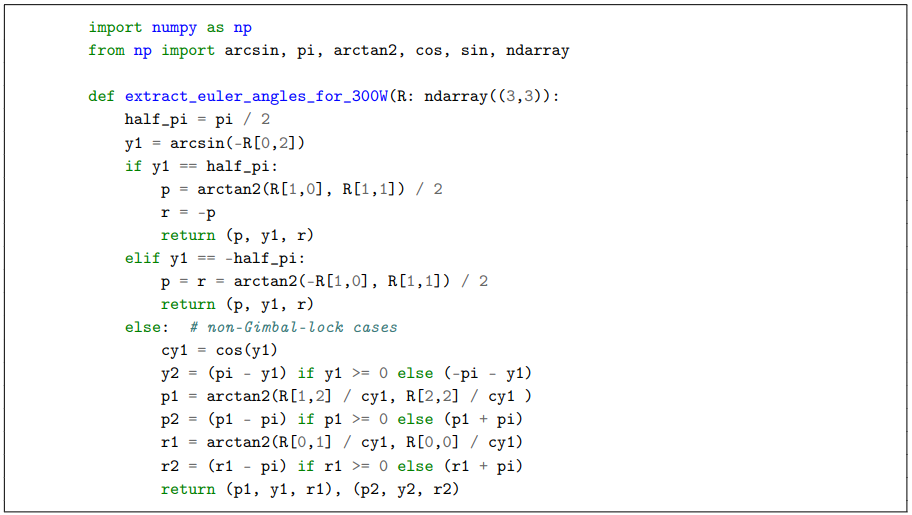}
\caption{Euler angle extraction for rotations in 300W-LP's rotation system.}
\label{fig:300wlp_extraction}
\end{figure}

\section{Proposed 3D Flipping and Rotation Augmentations}
\label{appendix: 3dflip_rot}
\begin{defn} \label{aug_type_def} The 2D images are assumed to be on the $XY$-plane of the 300W-LP's coordinate system as in Figure \ref{fig:300w-lp-system2}. 2D images' \textbf{horizontal \& vertical flipping} are the flipping across the image's vertical Y-axis and horizontal X-axis, respectively. On the other hand, to align with OpenCV's built-in affine transformation, we define a \textbf{image's rotation with the angle $\bm{\phi}$} to be the rotation rotating the image an angle, $\phi$, counter-clockwise. 
\end{defn} 

\begin{defn} \label{L_def}
Define $\bm{L_{\theta}}$ to be the line passing through the origin and the angle from the horizontal axis and itself (on the vertical plane, denoted by $\bm{P_{v}}$, intersecting with the horizontal and vertical axes) is $\theta$. For example, the line $L_{\theta}$ of 300W-LP's coordinate system represents one constructed from rotating the X-axis ($y = 0$) by the angle $\theta$ counter-clockwise on the XY-plane.  Because of 300W-LP's left-handed nature for the XY-plane's rotations, such rotations are clockwise. So $\theta > 0$ is equivalent to the roll rotation $r = - \theta$ in 300W-LP's rotation system. However, Wikipedia's rotation system's right-handed nature guarantees its roll rotation $r = \theta$ will remain the same sign as $L_{\theta}$'s $\theta$.
\end{defn}

\begin{defn} \label{gen_flip_def}
Next, we define the \textbf{2D flipping about $\bm{L_{\theta}}$} to be the flipping across $L_{\theta}$ on the vertical plane $\bm{P_{v}}$ defined in Definition \ref{L_def}. 
\end{defn}

We will derive formulas under 300W-LP's coordinate system and conventions. Similar derivations can be carried out easily for other coordinate systems.
\begin{theorem} \label{thm:2d_rotation}
Fix a 3D rotation system. Suppose an image contains a human head, and the head's intrinsic rotation $R$ in the rotation system is given. Prove that the 3D rotation, denoted by $R_{rotate{\_}img}(\phi)$, associated with rotating this image by an angle $\phi$ is the same to apply the extrinsic rotation, denoted as $R_{extr}(\phi)$, of rotating the axis (perpendicular to the image) by the angle $\phi$ on $R$. Hence, we have 
\begin{equation}
  \label{formula:58}
    R_{rotate{\_}img}(\phi) = R_{extr}(\phi)  \times R      
\end{equation}
\end{theorem}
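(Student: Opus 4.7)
The plan is to argue that rotating the 2D image about its center by $\phi$ is precisely the observable effect of rotating the rigid 3D head (together with its intrinsic coordinate frame) by $\phi$ about the axis perpendicular to the image plane, as witnessed from the fixed camera/world frame. First I would set up the projection model: in 300W-LP's coordinate system the image lies on the $XY$-plane and the camera looks along the $Z$-axis, so the image of any 3D point $(x,y,z)$ is $(x,y)$. A 2D counter-clockwise rotation by $\phi$ in the image plane sends $(x,y)$ to the usual rotated point, and the preimage of this transformation in 3D is exactly the rigid rotation about the $Z$-axis by $\phi$, since that rotation leaves $z$ untouched and rotates $(x,y)$ by $\phi$. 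Hence pixel-wise image rotation is induced by a genuine rigid 3D rotation about $Z$.

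Next I would observe that this 3D rotation is performed about the fixed $Z$-axis of the external camera/world frame, which does not move with the head. By Definition \ref{def:extrinsic_def}, a rotation about an axis of the extrinsic (external) coordinate system is precisely an extrinsic rotation; call this $R_{extr}(\phi)$. Thus the augmentation simply pre-composes the current head orientation with an extrinsic rotation about the perpendicular axis.

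Finally I would combine the pre-existing intrinsic head rotation $R$ with $R_{extr}(\phi)$ using Theorem \ref{thm:diff_rots2}: that theorem shows that extrinsic elemental rotations appear on the \emph{left} of intrinsic ones in the composition. Therefore the overall rotation after the augmentation is $R_{rotate\_img}(\phi) = R_{extr}(\phi) \times R$, which is the claimed formula (\ref{formula:58}). Substituting the explicit $Z$-axis elemental rotation matrix from Formula (\ref{eq:16}), with the correct handedness of the 300W-LP system, then recovers the specific matrix form used in Theorem \ref{thm:300wlp_2d_rotation}.

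The main obstacle is the first step, the projection/correspondence argument: one must carefully show that rotating the image plane itself (a 2D pixel-wise operation) is equivalent, at the level of 3D geometry, to rigidly rotating the head about the perpendicular $Z$-axis, and nothing else is required. Sign bookkeeping is the delicate point, because the handedness of the $Z$-axis roll in 300W-LP is left-handed whereas the standard 2D image rotation is conventionally expressed in right-handed counter-clockwise form; this discrepancy is exactly why the downstream matrix in Theorem \ref{thm:300wlp_2d_rotation} involves $R^{left}_{Z}(-\phi)$ rather than $R^{left}_{Z}(\phi)$, and any careful proof must track this sign through the derivation.
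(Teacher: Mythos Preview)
Your proposal is correct and follows essentially the same line as the paper's proof: identify the 2D image rotation as an extrinsic rotation about the $Z$-axis (the paper states this tersely as ``the image's rotation can only happen on 300W-LP's extrinsic $xy$-plane''), then invoke Theorem \ref{thm:diff_rots2} to place the extrinsic factor on the left of $R$, and finally track the handedness sign to arrive at $R^{left}_{Z}(-\phi)$. Your projection argument in step one merely makes explicit what the paper asserts without elaboration.
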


\begin{proof}
Without loss of generality, let's fix to 300W-LP's rotation system and $R \ne I_{3}$.  Due to 300W-LP's clockwise-roll nature, rotating the image by $\phi$ counter-clockwise results in the roll's elemental rotation $R^{left}_{Z}(-\phi)$ defined in Formula \ref{eq:16}. Moreover, this rotation is extrinsic because the image's rotation can only happen on 300W-LP's extrinsic $xy$-plane. Then Theorem \ref{thm:diff_rots2} implies that $R_{rot{\_}img}(\phi)$ satisfies Formula (\ref{eq:39}).
\begin{equation}
  \label{eq:39}
  \begin{split}
    &R_{extr}(\phi) = R^{left}_{Z}(-\phi), \\
    &R_{rot{\_}img}(\phi) = R_{extr}(\phi) \times R = R^{left}_{Z}(-\phi) \times R \\ \implies 
    &R_{rot{\_}img}(\phi)
    =  \begin{bmatrix}
        cos(\phi) & -sin(\phi) & 0\\
        sin(\phi) & cos(\phi) & 0\\
        0 & 0 & 1
       \end{bmatrix} \times R 
  \end{split}
\end{equation}
\end{proof}
\begin{theorem} \label{thm:flip}
Let's fix 300W-LP's rotation system and restrict $0\degree \leq \theta \leq 90\degree$. Suppose an image contains a human head, and the head's intrinsic rotation $R$ in the fixed rotation system is given. Prove that the 3D rotation $R_{flip}(\theta)$, corresponding to the 2D flipping across $L_{\theta}$ and the given head's rotation $R$, can be expressed as follows:
\begin{equation}
  \label{formula:52}
    R_{flip}(\theta) = \begin{bmatrix}
        cos(2\theta) & sin(2\theta) & 0\\
        sin(2\theta) & -cos(2\theta) & 0\\
        0 & 0 & 1
       \end{bmatrix}  \times R \times \begin{bmatrix}
        -1 & 0 & 0\\
        0 & 1 & 0\\
        0 & 0 & 1
       \end{bmatrix}     
\end{equation}
\end{theorem}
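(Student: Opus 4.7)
The plan is to reduce the general flip to a base flip (the $\theta = 0$ vertical flip) composed with an image rotation, and then invoke Theorem \ref{thm:2d_rotation} to finish. I will use the classical planar identity that a reflection across the line $L_\theta$ equals a reflection across $L_0$ (the $X$-axis) followed by a counter-clockwise rotation of $2\theta$. I would first record this identity for the 2D image operations, then translate each factor into its effect on the head's 3D rotation matrix.

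The base case $\theta = 0$ is the heart of the argument, and I would handle it in two steps. First, negating the image's $y$-coordinate corresponds, at the level of the 3D scene, to the 3D reflection $\mathrm{diag}(1,-1,1)$ applied on the left of $R$, since the image plane is the world $XY$-plane and the camera projects along $Z$. This map has determinant $-1$, so it is not yet a valid element of $SO(3)$. To restore properness, I would invoke left-right face symmetry (the standard assumption underlying every flip augmentation for face datasets): because the face is mirror-symmetric about its intrinsic $YZ$-plane, negating the head's intrinsic $X$-axis leaves the rendered face visually unchanged. Negating the intrinsic $X$-axis is implemented by right-multiplying $R$ by $\mathrm{diag}(-1,1,1)$. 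Combining the two yields
\begin{equation*}
R_{flip}(0) = \begin{bsmallmatrix} 1 & 0 & 0 \\ 0 & -1 & 0 \\ 0 & 0 & 1 \end{bsmallmatrix} \times R \times \begin{bsmallmatrix} -1 & 0 & 0 \\ 0 & 1 & 0 \\ 0 & 0 & 1 \end{bsmallmatrix},
\end{equation*}
which has determinant $+1$ and therefore lies in $SO(3)$.

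With the base case in hand, I would apply Theorem \ref{thm:2d_rotation} with $\phi = 2\theta$ to $R_{flip}(0)$, which prepends the factor $R_{rot\_img}(2\theta)$ on the left of $R_{flip}(0)$. Multiplying this rotation factor into the base case's left factor $\mathrm{diag}(1,-1,1)$ is a single $2 \times 2$ matrix product that collapses directly to the required leading matrix in (\ref{formula:52}), while the right factor $\mathrm{diag}(-1,1,1)$ passes through untouched. That completes the derivation.

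The step I expect to be the main obstacle is the base case, because it is the only place where an improper 3D transformation appears and where an external assumption (left-right face symmetry) must be imported to convert it back into a genuine rotation. A secondary, more routine hazard is the left-handed sign convention of 300W-LP's rotation system: I would double-check that \emph{counter-clockwise} in the planar reflection identity aligns with the sign baked into Theorem \ref{thm:2d_rotation}, to ensure no stray minus sign creeps in when composing the base flip with the $2\theta$ rotation.
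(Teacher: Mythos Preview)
Your proof is correct and takes a genuinely different route from the paper. The paper derives the left factor $\begin{bsmallmatrix}\cos 2\theta & \sin 2\theta & 0\\ \sin 2\theta & -\cos 2\theta & 0\\ 0 & 0 & 1\end{bsmallmatrix}$ directly, by computing where the extrinsic basis vectors $\bm{e_x}$ and $\bm{e_y}$ are sent under the planar reflection across $L_\theta$, and then appeals to Theorem~\ref{thm:diff_rots2} to sandwich $R$ between this extrinsic reflection and the intrinsic $X$-flip. You instead factor the image operation as (vertical flip) $\circ$ (CCW rotation by $2\theta$), establish the base case $R_{flip}(0)=\mathrm{diag}(1,-1,1)\,R\,\mathrm{diag}(-1,1,1)$, and then invoke Theorem~\ref{thm:2d_rotation} with $\phi=2\theta$ to prepend $\begin{bsmallmatrix}\cos 2\theta & -\sin 2\theta & 0\\ \sin 2\theta & \cos 2\theta & 0\\ 0 & 0 & 1\end{bsmallmatrix}$; the $2\times 2$ product with $\mathrm{diag}(1,-1)$ collapses to the required reflection matrix. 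Your decomposition is more modular---it recycles Theorem~\ref{thm:2d_rotation} rather than recomputing a reflection from scratch, and it makes transparent why the answer depends only on $2\theta$---while the paper's argument is shorter and self-contained, needing no auxiliary base case. Both arguments handle the improper-to-proper conversion (your ``main obstacle'') the same way, via the intrinsic $X$-flip justified by left--right face symmetry; the paper phrases this as ``the flipping also causes the intrinsic $X$-axis to change to the opposite direction.'' Your sign worry is unfounded here: the CCW convention in your planar identity coincides with the one in Formula~(\ref{formula:1}), so no stray minus sign appears.
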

\begin{proof}
Without loss of generality, let $R \ne I_{3}$. It's trivial that the 2D flipping across $L_{\theta}$ will flip from $\bm{e_{x}} \coloneqq (1, 0, 0)$ to $(cos(2\theta), sin(2\theta), 0)$. But the 2D flipping across $L_{\theta}$ flips $\bm{e_{y}} \coloneqq (0, 1, 0)$ to $(sin(2\theta), -cos(2\theta), 0)$ requires some work. 
So the matrix of the extrinsic 3D flipping is $\begin{bmatrix}
        cos(2\theta) & sin(2\theta) & 0\\
        sin(2\theta) & -cos(2\theta) & 0\\
        0 & 0 & 1
       \end{bmatrix}$. 
The flipping also causes the intrinsic X-axis to change to the opposite direction, which results in the intrinsic flipping across the X-axis, i.e., 
$\begin{bmatrix}
-1 & 0 & 0 \\
0 & 1 & 0 \\
0 & 0 & 1
\end{bmatrix}$.  Therefore, the 2D flipping across $L_{\theta}$ relates to the 3D rotation, which applies an extrinsic $L_{\theta}$-flipping and an intrinsic $X$-flipping. By Theorem \ref{thm:diff_rots2}, we have $R_{flip}(\theta) =  (extrinsic{\_}L_{\theta}{\_}flipping) \times R  \times (intrinsic{\_}X{\_}flipping)$, and thus proved Formula (\ref{formula:52}).
\end{proof}


\section{Euler angle conversion from pitch-yaw-roll to roll-pitch-yaw axis-sequences }
\label{appendx:pyr_2_rpy}
Let's start with 300W-LP's Euler angle definitions $R^{left}_{X}(p), R^{left}_{Y}(y), R^{left}_{Z}(r)$ and $R_{W}(y, p, r)$ in Formula (\ref{eq:16}). Then Blender's image generation with given pitch and yaw values as input follows 300W-LP's pitch-yaw rotations:
\begin{equation}
  \label{formula:panohead}
  \begin{split}
    R_{B}(p', y') &\coloneqq R^{left}_{X}(p') \times R^{left}_{Y}(y') 
    = \begin{pmatrix}
    cos(y') & 0 & -sin(y') \\
    sin(p')sin(y') & cos(p') & sin(p')cos(y') \\
    cos(p')sin(y') & -sin(p') & cos(p')cos(y') 
    \end{pmatrix}
  \end{split}
\end{equation}

To overcome Blender's pure pitch-yaw head pose images,  we first apply the 3D rotation with Formula (\ref{formula:1}) to get the desired non-zero roll rotation.  Next we need to solve there exists a numerically stable solution for $y', p', r'$ in the following equation 
\begin{equation}
\label{formula:3wlp_panohead_relation}
\begin{split}
&R_{W}(y, p, r) = R_{B}(p', y', r') ,\quad where \\
R_{B}(p', y', r') &\coloneqq R^{left}_{Z}(r') \times R_{B}(p', y')
= R^{left}_{Z}(r') \times R^{left}_{Y}(y) \times R^{left}_{Z}(r).
\end{split}
\end{equation}

Or equivalently it suffices to solve for $y', p', r'$ in the following equation with given $p, y, r$ values:
\begin{equation}
  \label{formula:3wlp_2_panohead}
  \begin{split}
    &R_{W}(y, p, r) =\\
    &\begin{pmatrix}
    sin(p')sin(r')sin(y')+cos(r')cos(y') & sin(r')cos(p')  &  sin(p')sin(r')cos(y')-sin(y')cos(r')\\
    sin(p')sin(y')cos(r')-sin(r')cos(y') & cos(p')cos(r') & sin(p')cos(r')cos(y') + sin(r')sin(y') \\
    sin(y')cos(p') & -sin(p') & cos(p')cos(y') 
    \end{pmatrix}.\\
  \end{split}
\end{equation}
We use Appendix \ref{appendix:300wlp_euler_extraction}'s methodology to derive to our Blender and Panohead's \cite{an2023panohead} Euler angle extraction solution, Fig. \ref{listing:1}.  It is worthwhile to mention that the Frobenius-norm errors between $R_{W}(p,y,r)$ and $R_{B}(p', y', r')$ remains in $10^{-11}$ range from our experiments.  So our extraction method, Fig. \ref{listing:1}, provides a numerically stable Euler angle solution for our synthetic head pose dataset generation with Blender and Panohead's help.  


\begin{figure}[H]
\centering
\includegraphics[width=0.9\textwidth]{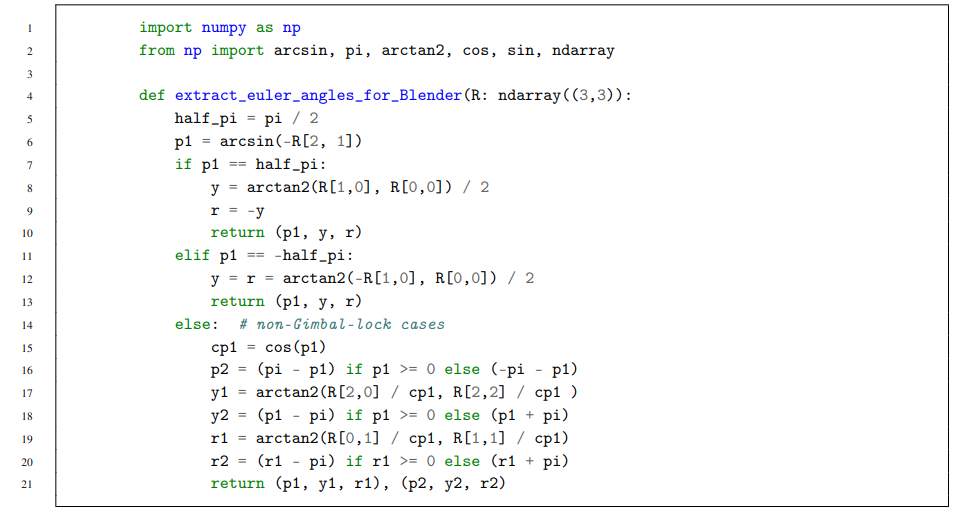}
\caption{Complete yaw, roll, pitch extraction for Blender's roll-pitch-yaw image generation.}
\label{listing:1}
\end{figure}



\end{document}